\newtheorem{thm}{Theorem}
\def\argmax{\operatornamewithlimits{argmax}}
\newtheorem{assumption}{Assumption}
\newcommand{\citeay}[1]{\citeauthor{#1} (\citeyear{#1})}
\title[PBIM]{Potential-Based Reward Shaping For Intrinsic Motivation}
\author{Grant C. Forbes, Nitish Gupta, Leonardo Villalobos-Arias, \\ Colin M. Potts, Arnav Jhala, David L. Roberts}
\affiliation{
  \institution{North Carolina State University}
  \city{Raleigh}
  \country{United States}}
\email{{gforbes,nagupta,lvillal,cmpotts,ahjhala,dlrober4}@ncsu.edu}
\begin{abstract}
Recently there has been a proliferation of intrinsic motivation (IM) reward-shaping methods to learn in complex and sparse-reward environments. These methods can often inadvertently change the set of optimal policies in an environment, leading to suboptimal behavior. Previous work on mitigating the risks of reward shaping, particularly through potential-based reward shaping (PBRS), has not been applicable to many IM methods, as they are often complex, trainable functions themselves, and therefore dependent on a wider set of variables than the traditional reward functions that PBRS was developed for. We present an extension to PBRS that we prove preserves the set of optimal policies under a more general set of functions than has been previously proven. We also present {\em Potential-Based Intrinsic Motivation} (PBIM), a method for converting IM rewards into a potential-based form that is useable without altering the set of optimal policies. Testing in the MiniGrid DoorKey and Cliff Walking environments, we demonstrate that PBIM successfully prevents the agent from converging to a suboptimal policy and can speed up training.
\end{abstract}
\keywords{Reinforcement Learning, 
Reward Shaping,
Potential-Based Reward Shaping,
Intrinsic Motivation,
Game-Playing Agents}
\newcommand{\BibTeX}{\rm B\kern-.05em{\sc i\kern-.025em b}\kern-.08em\TeX}
\begin{document}


\pagestyle{fancy}
\fancyhead{}


\maketitle 


\section{Introduction}

An increasing amount of work in reinforcement learning (RL) uses intrinsic reward functions, in addition to environmental rewards, to speed convergence to reasonable policies. This approach is particularly widespread in sparse-reward problems, or those that are exploration-heavy, and has had much success in these domains \cite{bellemare2016unifying, pathak2017curiosity, burda2018exploration}.

However, adding a secondary reward term may lead to changes in the set of optimal policies, with unintended, and potentially adverse, consequences. For example, \citet{burda2018large} show that an intrinsic reward that incentivizes visiting areas of the state space where the agent is less able to predict what will happen can result in the agent becoming ``addicted'' to watching a screen with flashing random images. \citet{amodei2016concrete} further discuss related issues. 

These issues can be mitigated through hyperparameter tuning---i.e., by multiplying the intrinsic rewards by some $\alpha$ and decreasing $\alpha$ until the problematic behavior disappears. \citet{chen2022redeeming} implemented an automated generalization of this method. However, this approach may decrease the utility of intrinsic motivation and is not generally guaranteed to preserve the optimal policy set. As an alternative, we extend the potential-based, policy-preserving reward shaping term of \citet{ng1999policy} to arbitrary reward functions, and show that this preserves the set of optimal policies. We contribute:

1. \textbf{An extension of potential based reward shaping (PBRS) to potential functions of arbitrary variables in episodic environments, and a proof that this extension does not alter the set of optimal policies}. We derive an accompanying boundary condition that serves as a sufficient condition for preserving optimality, and this allows for extending PBRS to reward functions, like intrinsic motivation (IM), which are dependent on a more general set of variables than has been addressed previously. 

2. \textbf{A novel method for converting any arbitrary reward function into this extended potential form}, maintaining the benefits of that function while mitigating its drawbacks, by guaranteeing that such a shaping reward will not alter the set of optimal policies in the underlying environment, and thus cannot be ``hacked'' by an agent that has converged to an optimal policy.

3. \textbf{An empirical demonstration that our method is effective} as both a safety measure to prevent hacking of intrinsic rewards and, in certain cases, to speed up training.

This paper expands on work originally presented in \cite{aaai24}.
\section{Related Work}

The two fields of study most directly relevant to our work are reward shaping---potential-based reward shaping, in particular---and intrinsic motivation. We will review the former of these first, focusing particularly on the theoretical extensions that have been made, and then the latter.

\subsection{Potential-Based Reward Shaping}

Reward shaping is the practice of adding some additional reward to an environment, usually with the goal of accelerating training. We define a Markov Decision Process (MDP) as a tuple $M = (S, A, T, \gamma , R)$. Here $S$ is the state space, and $A$ is the action space. $T : S \times A \times S \to [0,1]$ is the state transition function so that $T(s, a, s') = P(s_{t+1} = s' | s_t = s, a_t = a) $ is the probability of transitioning to state $s'$ by taking action $a$ in state $s$ (throughout this paper, ``$'$'' will be used to notate a subsequent time step). $\gamma \in [0,1]$ is the discount factor, and $R : S \times A \times S \to \Re $ is the reward function. Reward shaping for $M$ is training the agent in some second MDP $M' = (S, A, T, \gamma, R')$, where $R' = R + F$, and $F$ is the shaping reward (this problem and notation can be extended to partially observable MDPs (POMDPs) as well). It is easy to accidentally choose $F$ in a way that leads to unexpected behavior (see \cite{randlov1998learning}), because the set of optimal policies for $M'$ is not guaranteed to be the same as for $M$. \citet{ng1999policy} showed that the set of optimal policies for an MDP is unchanged by the addition of a shaping reward of the form 
\begin{equation}\label{Ng_potential}
    F(s,s') = \gamma \Phi(s') - \Phi(s).
\end{equation}
If a shaping reward of this form was added, then the Q-function of the new MDP, $M'$, would be equal to the old Q-function, plus a potential with no action-dependence:
\begin{equation}
     Q^*_{M'}(s,a) = Q^*_M(s,a) - \Phi(s).
\end{equation}
Because there is no action-dependence, choosing an action to maximize its sum will also maximize the original Q-function. This was only proven for MDPs that are infinite-horizon or contain a single ``absorbing state.'' 

\citet{wiewiora2003principled} extended this treatment to reward shaping terms of the form
\begin{equation} \label{wiewora_potential}
F(s,a,s',a') = \gamma \Phi(s', a') - \Phi(s,a),
\end{equation} 
allowing for action-depended ``advice'' potentials: they note, however, that in order for the theoretical guarantees of \citet{ng1999policy} to hold, the potential $\Phi(s,a)$ must be added back to the Q-value during policy training. 
Also note the dependence of Equation~\ref{wiewora_potential} on $a'$, which is necessary for this formulation to work, but requires a reward of the form $R(s,a,s',a')$, rather than the more standard $R(s,a,s')$.\footnote{\citet{wiewiora2003principled} also include a ``lookback'' formulation, wherein knowledge of $a'$ is not required, but rather $a_{t-1}$. This formulation, while not dependent on $a'$, still takes two action values as arguments, and is thus still nonstandard.} \citet{wiewiora2003potential} showed that this form of reward shaping parameter is theoretically equivalent to a thoughtful initialization of parameters for the policy parameterization. 

\citet{devlin2012dynamic} extend this formulation further, showing that any shaping reward of the form 
\begin{equation} \label{devlin_potential}
    F(s,t,s', t') = \gamma \Phi(s',t') - \Phi(s,t),
\end{equation}
where $t$ and $t'$ refer to the corresponding time steps of $s$ and $s'$, would not alter the set of optimal policies in a single-agent problem, or the Nash equilibrium in a multi-agent problem. They also prove that, unlike Equation~\ref{wiewora_potential}, shaping rewards of this form cannot be reduced to parameter initialization in the same way shown by \citet{wiewiora2003potential}.

\citet{harutyunyan2015expressing} combine the expansions to \citet{ng1999policy} of \citet{wiewiora2003principled} and \citet{devlin2012dynamic} into a general form
\begin{equation} \label{harutyanyun_potential}
    F(s,a,t,s',a',t') = \gamma \Phi(s',a',t')-\Phi(s,a,t)
\end{equation}
and demonstrate a method for converting shaping functions of the form $R(s,a,s')$ to this potential-based formulation\footnote{\citet{harutyunyan2015expressing} describe these as ``arbitrary reward functions'' that are the most general form of reward function that exists in a traditionally-defined MDP. This is required by the way the proof assumes convergence to a TD fixed point. This assumption is not guaranteed for rewards that are arbitrary in the wider sense we use, including most forms of IM---these are often not stable across time and thus a time-independent value function cannot be expected to converge.}. However, \citet{behboudian2022policy} prove that this formulation does not actually maintain an optimal policy in a general sense. They present PIES, an alternative method to preserve optimality, by iteratively decreasing the coefficient of the shaping rewards until it reaches zero. While this alternative method does indeed preserve optimality, it does so effectively through removing $F$ entirely from a portion of the training, and thus requires a balance between utilizing the benefits of the shaping rewards and mitigating their drawbacks.\footnote{Compare to \citet{chen2022redeeming}, who tune shaping reward coefficients for IM specifically, but not to zero.}
All works cited thus far have focused on the initial domain of \citet{ng1999policy}, which is environments that are either infinite-horizon, or which have a set absorbing state that terminates the episode. This distinction is formally emphasized by \citet{eck2016potential}, who also make a further extension of PBRS to Partially-Observable Markov Decision Processes (POMDPs). Noting this shortcoming, \citet{grzes2017reward} extended PBRS to episodic environments, which terminate after some final time step $N$. They note that, in such an environment, when adding the shaping reward in Equation~\ref{Ng_potential}, there is an additional term of difference between the episodic returns of $M'$ and $M$:
\begin{equation}\label{Grzes_contribution}
    U_{M'}(\tau_N) = U_M(\tau_N) + \gamma^N\Phi(s_N) - \Phi(s_0),
\end{equation}
where $U_{M}(\tau_N)$ is the cumulative return of an agent on MDP $M$ under the state-action trajectory $\tau_N$. The latter of these terms, present even in the infinite-horizon case \cite{ng1999policy}, has no action dependence, and thus cannot affect the set of optimal policies. The former, however, is implicitly action-dependent through the agent's ability to affect its final state, and thus poses a problem for maintaining the optimality of the learned policy. \citet{grzes2017reward} addresses this problem by adjusting the potential added at the end of the episode. The potential function essentially then becomes time-dependent, as defined in Equation~\ref{Ng_potential} at all time steps except the last of an episode, where it is zero. Formally:
\begin{equation} \label{grzes_potential}
    \Phi_n = \begin{cases}  0 &  \text{if } n = N \\  \Phi(s) &  \text{otherwise}.\end{cases}
\end{equation}
This treatment applies not only to fixed values of $N$, but to environments where a number of different states could be terminal states, and termination could happen at differing times: here $N$ is treated as ``the time at which the episode ends,'' and can freely vary from episode to episode. This truncating of the potential in the last time step ensures that the problematic term from Equation \ref{Grzes_contribution} will always equal zero, and thus restores the desired optimality guarantees.

\citet{goyal2019using} extended PBRS to a potential based on an ``action frequency vector,'' which contains information about the agent's trajectory over some slice of time, while preserving optimality.

We are extending the potential-based formulation further to accommodate potentials that are a function of an arbitrary set of variables. Most commonly, this will simply be $\Phi(\tau_{0}, \tau_{1}, ... \tau_{M})$, where $\Phi$ is the shaping potential, $M$ is the total number of episodes during training, and $\tau_m = (s_0, a_0, s_1, a_1, ... a_{N_m-1}, s_{N_m})$ is  the full trajectory of states and actions during episode $m$ of training. This is sufficiently general to accommodate most IM terms, such as ICM \cite{pathak2017curiosity} or RND \cite{burda2018exploration}. However, to emphasize a generality that could in principle extend beyond this, rather than writing the dependence explicitly (as in, i.e., the $\Phi(s,a,s',a')$ of \citet{wiewiora2003principled}), we will write either $F_n$ or $\Phi_n$ for simplicity of notation and to emphasize that we are dealing with arbitrary variable-dependence.

\subsection{Intrinsic Motivation} \label{IM}

Intrinsic motivation has proven increasingly useful for complex or sparse-rewards environments in recent years. However, the actual reward shaping terms used in the IM literature lie almost universally outside of the traditional MDP and POMDP frameworks, as they cannot be written as a function of a single state transition, $R(s,a,s')$.

A large portion of IM literature is focused on incentivizing exploration, particularly in sparse-reward environments. Simple versions exist, such as incentivizing taking actions that have not been taken recently \cite{sutton1990integrated} or keeping a tabular list of how often each state has been explored, and rewarding less-visited ones \cite{strehl2008analysis}. Recently, more complex exploration rewards have been developed. Tabular methods have been extended to larger, more complex environments through ``pseudo-counts'' \cite{bellemare2016unifying,ostrovski2017count}, which use a learned representation of the (potentially continuous) state space. Curiosity-based methods like Intrinsic Curiosity Module (ICM) \cite{pathak2017curiosity} reward agents for ``surprising'' (maximizing the error rate of) an auxiliary network trained to predict the environment state dynamics. Random Network Distillation (RND) \cite{burda2018exploration}, similarly, rewards agents for fooling a predictor in a random feature space.
Another common IM method relies on ``empowerment'' \cite{mohamed2015variational}, which is a mutual information metric between the agent's actions and future states. \citet{raileanu2020ride} aims to maximize the impact of an agent's actions on a learned state representation.

All examples thus far have used IM as a method for supplementing (usually sparse) base extrinsic rewards. Recently, IM without the base reward, either to learn skills to be applied later \cite{eysenbach2018diversity} or to replace external rewards entirely \cite{burda2018large}, has gained attention.


There has been some prior work on the potential risks of IM. Examples include the ``noisy TV'' problem, where an agent with an exploration term advising it to seek novelty can get distracted from a base task by some particularly stochastic object in its environment \cite{burda2018large}. There is also a tendency of other exploration terms less susceptible to the noisy TV problem, such as RND, still causing agents to become noticeably ``risk seeking'' once they've exhausted all easy-to-obtain intrinsic rewards \cite{burda2018exploration}. There is a large body of theoretical work in this area \cite{amodei2016concrete}, but empirical study remains sparse. We hope that our method can serve as a tool to better assist empirical research in this area.

There has been some other work in the area of mitigating potential adverse effects of IM terms, coming mostly in the form of hyperparameter tuning. \citet{chen2022redeeming}, for example, utilize a clever method of automatically tuning up exploration coefficients in exploration-heavy environments and tuning them down in regions where IM is less beneficial. Our solution differs from this in two key ways. Firstly, and most importantly, it delivers vital theoretical guarantees that the set of optimal policies will remain unchanged, and thus that any convergence guarantees apply within the new MDP. Secondly, while \cite{chen2022redeeming} requires additional hyperparameters, network architecture, and optimization steps beyond that for the combined loss function, our method requires virtually no additional computational overhead, and addresses the problem solely by adjusting the reward shaping term to one that guarantees an unchanged set of optimal policies. 
\section{Main Results}

Here we demonstrate an extension of PBRS to arbitrary potential functions that satisfy a boundary condition. Motivated by this condition, we then develop a method for converting almost (see Assumption~\ref{not_empowerment}) any arbitrary reward function to a potential that preserves optimality. We developed two versions of this conversion method, one normalized and one non-normalized. We derive the initial boundary condition in Section~\ref{extending_sec}, and discuss it in relation to prior optimality-preserving PBRS results. We develop the resulting reward-converting methods in Section~\ref{converting}.

\subsection{Extending Potential-Based Reward Shaping to Functions of Arbitrary Variables} \label{extending_sec}

In an episodic environment, we normally want to choose a policy $\pi$ so as to optimize the value function
\begin{equation} \label{V}
    V_{M}^\pi = \displaystyle \mathop{\mathbb{E}}_{a \sim \pi, s \sim T, R_n \sim R} U_M^\pi.
\end{equation}
Here $U_M^\pi$ is the cumulative discounted return:
\begin{equation} \label{U}
     U_M^\pi = \sum_{n = 0}^{N-1} \gamma^n R_n,
\end{equation}
where the rewards $R_n$ are sampled from acting under policy $\pi$ according to the transition dynamics and reward function of environment $M$. Note that we are considering the general case where the reward function itself need not be deterministic. We also want to define the discounted future return at some arbitrary time step $t$: 
\begin{equation} \label{U_t}
     U_{M,t}^\pi = \sum_{n = t}^{N-1} \gamma^{n-t} R_n,
\end{equation}
the expectation of which is $V_{M,t}^\pi$. Given this, an optimal policy under $R$ for environment $M$ at time $t$ will satisfy
\begin{equation}
    \pi_{M}^* = \argmax_\pi( V_{M,t}^\pi).
\end{equation}
This optimal policy $\pi_M^*$ will also satisfy
\begin{equation}\label{Q_argmax}
     \pi_{M}^*(s) = \argmax_{a_t}( Q_{M,t}^*)
\end{equation}
where
\begin{equation}\label{Q}
    Q_{M,t}^\pi = R_t + V_{M,t+1}^\pi,
\end{equation}
and $Q_{M}^*$ is taken to be $Q_{M}^\pi$ of the optimal policy $\pi = \pi^*_M$. 
If we now define a new environment $M'$ equivalent to $M$ but with the addition of a shaping reward
\begin{equation} \label{shaping_reward}
    F_n = \gamma \Phi_{n+1} - \Phi_n,
\end{equation}
then we can calculate the return for a trajectory in $M'$
\begin{align} \label{U_M_prime}
    {U_{M',t}^\pi} &{= \sum_{n=t}^{N-1} \gamma^{n-t} (R_n + F_n)} \\
    \label{U_M_prime2}
         &{= \sum_{n = t}^{N-1} \gamma^{n-t} (R_n + \gamma \Phi_{n+1} - \Phi_n).}
\end{align}
In order to prove that adding a shaping reward of the form in Equation \ref{shaping_reward} does not alter the set of optimal policies of the underlying environment, it is sufficient to prove that, at every state and timestep, choosing $a$ to optimize $Q_{M',t}^*$ will necessarily optimize $Q_{M,t}^*$ as well, and vice versa. To investigate the conditions under which this relation will hold, we can reduce Equation~\ref{U_M_prime2} to
\begin{align} \label{new_potential}
{U_{M',t}^\pi =} &{\sum_{n=t}^{N-1} \gamma^{n-t}R_n + \sum_{n=t}^{N-1} \gamma^{n-t}(\gamma\Phi_{n+1} - \Phi_n) }\\
  {=}& { U_{M,t}^\pi + \gamma \Phi_{t+1} - \Phi_t + \gamma^2 \Phi_{t+2} - \gamma \Phi_{t+1}  + }\\
  &{\gamma^3\Phi_{t+3} - \gamma^2 \Phi_{t+2} + \cdots + \gamma^{N-t}\Phi_N - \gamma^{N-(t+1)}\Phi_{N-1} }\\
  {=}&{ U_{M,t}^\pi + \gamma^{N-t}\Phi_N - \Phi_t.}
\end{align}
This is essentially the derivation for Equation~\ref{Grzes_contribution} by \citeay{grzes2017reward} with a potentially non-Markovian $\Phi_t$, and generalized to apply to all time steps, rather than just $t=0$. Through an application of Equation \ref{V} and Equation \ref{Q}, this becomes
\begin{equation}
    {
    Q_{M',t}^\pi = Q_{M,t}^\pi + \mathop{\mathbb{E}}_{a \sim \pi, s \sim T, R_n \sim R} \left( \gamma^{N-t}\Phi_N - \Phi_t \right).
    }
\end{equation}
Here we see that $Q_{M',t}^\pi$ differs in expectation from $Q_{M,t}^\pi$ by two terms. If the sum of these terms is constant with respect to $a_t$, then Equation \ref{Q_argmax} can be applied to show the equivalence of optimal policies between these two environments. This leads us to the condition
\begin{equation}\label{boundary_condition}
     \mathop{\mathbb{E}}_{a \sim \pi, s \sim T, R_n \sim R} \left( \gamma^{N-t}\Phi_N - \Phi_t \right) = \Phi'_t \qquad \forall t \in (0,1,...N-1),
\end{equation}
where $\Phi'_t$ is some arbitrary function that is constant with respect to action $a_{t}$. From here, we can state Theorem \ref{extending_thm}:
\begin{thm}[Sufficient Condition For Optimality]\label{extending_thm}
    The addition of a shaping reward $F_t = \gamma \Phi_{t+1} - \Phi_t$ leaves the set of optimal policies unchanged if Equation \ref{boundary_condition} holds.
\end{thm}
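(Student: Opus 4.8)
Proof proposal.

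The derivation that culminates in Equation~\ref{boundary_condition} already does most of the work, so the plan is to assemble those pieces into a statement about optimal policy sets. First I would start from the telescoped identity $U_{M',t}^\pi = U_{M,t}^\pi + \gamma^{N-t}\Phi_N - \Phi_t$ and, via Equation~\ref{Q} and the definition of the value function, deduce that $Q_{M',t}^\pi = Q_{M,t}^\pi + \mathop{\mathbb{E}}_{a \sim \pi, s \sim T, R_n \sim R}(\gamma^{N-t}\Phi_N - \Phi_t)$, where this conditional expectation fixes the state and action at time $t$ and integrates out everything afterward under $\pi$. By Equation~\ref{boundary_condition} the correction term is $\Phi'_t$, which by hypothesis does not depend on the action $a_t$. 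Consequently, for every policy $\pi$, every time step $t$, and every state reachable at time $t$, the functions $Q_{M',t}^\pi$ and $Q_{M,t}^\pi$ differ by a quantity that is constant in $a_t$, so they have the same maximizing actions: $\argmax_{a_t} Q_{M',t}^\pi = \argmax_{a_t} Q_{M,t}^\pi$, and the same holds for $Q^*$.

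Second, I would convert this per-step fact into the claim $\pi^*_{M'} = \pi^*_M$ (as sets) by backward induction on $t$ from the horizon down to $0$. At the last step the continuation returns vanish in both MDPs, so $V_{M',N}^\pi = V_{M,N}^\pi$ and every policy is trivially optimal from there. For the inductive step, assume a policy is optimal from time $t{+}1$ onward in $M'$ iff it is so in $M$; then $Q^*_{M',t}$ and $Q^*_{M,t}$ are built from continuations that agree in this sense, and by the $\argmax$-equivalence above a policy is greedy at time $t$ with respect to $Q^*_{M',t}$ iff it is greedy with respect to $Q^*_{M,t}$. Invoking Equation~\ref{Q_argmax} (the characterization of optimal policies as those that act greedily at every state and time) then gives optimality-from-$t$ equivalence, and running the induction down to $t=0$ yields the theorem; because the $\argmax$-equivalence is symmetric, both inclusions $\pi^*_M \subseteq \pi^*_{M'}$ and $\pi^*_{M'} \subseteq \pi^*_M$ come out at once.

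Finally, I would emphasize why the boundary condition is doing essential work and where the care is needed. Since $\Phi_n$ may depend on an arbitrary set of variables --- in particular on actions taken \emph{after} time $t$, and even on other episodes --- the value-function gap $V_{M',t}^\pi - V_{M,t}^\pi = \mathbb{E}_{a\sim\pi}(\gamma^{N-t}\Phi_N - \Phi_t)$ is in general not constant in $\pi$, so one cannot argue equivalence by comparing $V^\pi$ across policies directly; the argument must stay at the level of $Q$-values at a fixed state--time--action with the continuation policy held fixed, which is precisely the setting in which Equation~\ref{boundary_condition} guarantees action-independence. I expect the main obstacle to be exactly this bookkeeping: verifying that the correction term is action-$a_t$-independent \emph{pointwise} rather than merely in expectation over $a_t$, and checking that the backward induction closes cleanly even though $\Phi_N$ depends on the actions chosen after time $t$. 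Once the expectation in Equation~\ref{boundary_condition} is read as ``condition on $(s_t,a_t)$, integrate the rest,'' these points reduce to routine checks.
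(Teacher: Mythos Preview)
Your proposal is correct and follows essentially the same route as the paper: derive $Q_{M',t}^\pi = Q_{M,t}^\pi + \mathbb{E}(\gamma^{N-t}\Phi_N - \Phi_t)$, invoke Equation~\ref{boundary_condition} to make the correction $a_t$-independent, and conclude $\argmax_{a_t}$-equivalence. The paper's proof is in fact terser than yours---it simply chains the $\argmax$ equalities at each $t$ without an explicit backward induction---so your added inductive bookkeeping and the discussion of why policy-dependence of the correction term forces one to argue at the $Q$-level rather than the $V$-level are refinements the paper leaves implicit.
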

\begin{proof}
Given Equation \ref{boundary_condition}, then $\forall t \in (0,1,...,N-1)$,
\begin{align} \label{policy_set_equivalence}
    {\pi_{M'}^*(s)} &{= \argmax_{a_t}( Q_{M',t}^*) }\\
    &{= \argmax_{a_t} (Q_{M,t}^* + \displaystyle \mathop{\mathbb{E}} \left( \gamma^{N-t}\Phi_N - \Phi_t \right)) }\\
    &{= \argmax_{a_t} (Q_{M,t}^* + \Phi'_t)}\label{prephi} \\
                    &{= \argmax_{a_t} (Q_{M,t}^*) = \pi_M^*(s).}\label{postphi}
\end{align}
Note the step between Equations \ref{prephi} and \ref{postphi}: here we are relying on the $a_t$-independence of $\Phi'_t$ to ensure it doesn't affect the $\argmax_{a_t}$ term.\footnote{This step is similar to a step in the central proof of \cite{ng1999policy}.} This is equivalent to stating the set of optimal policies is unchanged by the shaping reward. 
\end{proof}
It is worthwhile to briefly examine what prior work has done to preserve the condition in Equation \ref{boundary_condition}, in order to emphasize that this is the most general treatment of this problem to date, and to situate it within prior literature. In a non-episodic setting, the $\gamma^{N-t}\Phi_N$ term either drops out (in the infinite-horizon setting) or is definitionally independent of $a_t$ (in the setting with a set absorbing state). Thus, much prior work in this area has focused on solely the $-\Phi_t$ term. This has been dealt with by either restricting the potential to be independent of $a_t$ \cite{ng1999policy, devlin2012dynamic}, restricting it to be independent of $a_t$ in the limit as training continues \cite{behboudian2022policy}, or subtracting this potential where appropriate to accommodate its $a_t$-dependence \cite{wiewiora2003principled}. All of the restrictions to $\Phi$ introduced by these methods can be viewed as subsets of the general class of shaping functions that satisfy Equation \ref{boundary_condition}. Similarly, prior work in episodic PBRS has restricted itself to $-\Phi_t$ terms that are independent of $a_t$, and thus has dealt with the $\gamma^{N-t}\Phi_N$ term by setting $\Phi_N = 0$ \cite{grzes2017reward}. Again, these solutions combined, while a valid subset of the larger solution space for Equation \ref{boundary_condition}, excludes an important set of solutions in which each of these terms, while individually $a_t$-dependent, have this dependence cancel out when they are summed together. As we will see, these solutions have incredible potential applications for novel shaping functions, particularly as a method to incorporate IM methods without changing the optimal policy of the underlying environment.

\subsection{Converting Functions of Arbitrary Variables to Potential-Based Reward Functions} \label{converting}

All of the IM examples we cited above can change the set of optimal policies, with possibly adverse effects. Thus mitigating these effects, and using IM while guaranteeing the set of optimal policies isn't altered, is highly desirable. We present a practical and straightforward way to convert most IM rewards to a form that is guaranteed not to alter the set of optimal policies. More formally, we present a method that guarantees not to alter the optimal policy for an IM whose terms do not depend on the future actions of the agent. We call this approach Potential Based Intrinsic Motivation~(PBIM).

The trick is to realize that, in all time steps but the last, any arbitrary reward function (including IM) \emph{is already a difference of a potential function in the proper form} due to the recursive relation between rewards and their respective cumulative returns. If we define $F_t$ to be an arbitrary intrinsic reward at time step $t$, and $U_t^\pi$ to be the cumulative discounted intrinsic reward sampled from following policy $\pi$ at time step $t$\footnote{Here, we drop the $M$ subscript for simplicity. Note however that in this section, $U_t^\pi$ exclusively denotes the \emph{intrinsic} discounted return, rather than the sum of intrinsic and extrinsic returns. 
}, then we can rewrite the standard recursive relation between them as
\begin{equation} \label{rewritten_Bellman}
F_t = U_t^\pi - \gamma U_{t+1}^\pi.
\end{equation}

This is conveniently similar to the necessary potential formulation in Equation~\ref{shaping_reward}. In fact, if we choose $\Phi_t = -U_t^\pi$, these equations become identical. 

Choosing this form for $\Phi_t$ may initially appear untenable in a wide variety of environments, as it seems to imply that we will need to know $U_t$ before beginning training. This would presuppose a level of knowledge about the environment and future trajectory of the agent that is unrealistic. However, this is not the case: we don't have to actually know $U_t$ in order to set it equal to $\Phi_t$. We only have to know $F_t$, as it is already a difference of the requisite potential in Equation \ref{rewritten_Bellman}, even if we don't know that potential itself.

If we thus choose $\Phi_t = -U_t^\pi$ and implement an IM term normally, we can investigate under what conditions the optimal policy set is preserved by examining Equation \ref{boundary_condition} under this condition. It becomes
\begin{equation}\label{potential_boundary}
     \mathop{\mathbb{E}}_{a \sim \pi, s \sim T, R_n \sim R} \left( U_t^\pi - \gamma^{N-t}U_N^\pi\right) = \Phi'_t \qquad \forall t \in (0,1,...N-1).
\end{equation}
This condition is not satisfied by default for most IM terms, as $U_t^\pi$ will be action-dependent in most interesting environments. It is also unnecessarily complicated in this formulation, as $U_N=0$ for any environment in an episodic setting. These observations motivate the choice of potential
\begin{equation} \label{initial_potential}
    \Phi_t = \begin{cases} -U_0^\pi/\gamma^N, &  \text{if } t = N \\  -U_t^\pi, &  \text{if }  t \neq N, \end{cases}
\end{equation}
which is similar to choosing $\Phi_t = -U_t^\pi$, but with the crucial exception that $\Phi_N =-\frac{U_0^\pi}{\gamma^N}$, by virtue of $N$ being the last time step in the episode. Our choice of potential here for the $t=N$ case is motivated by setting $\Phi'_t$ in Equation \ref{potential_boundary} to 0 for the $t=0$ case, and solving for $\Phi_N$.

Thus, if we have a shaping reward $F_t$, and we want to utilize some optimality-preserving permutation of it $F'_t$ utilizing the potential of Equation \ref{initial_potential}, we can use
\begin{equation} \label{naive_conversion}
    F'_t = \begin{cases}  \sum_{n = 0}^{N-2} -\gamma^{n+1-N}F_n, &  \text{if } t = N - 1 \\  F_t, &  \text{if } t \neq N - 1,\end{cases}
\end{equation}
which is simply $F'_t = \gamma\Phi_{t+1} - \Phi_t$ with the potential function defined in Equation \ref{initial_potential}. 

Equation \ref{naive_conversion} has an incredibly intuitive and appealing interpretation. It is equivalent to implementing the shaping reward ``normally'' until the very last time step, at which point the total discounted rewards are subtracted in order to ensure Equation \ref{boundary_condition} still holds. Described this way, it is both simple to understand and straightforward to implement.

Equation~\ref{naive_conversion} also has the advantage that it makes it particularly difficult for most agents to ``figure out'' that optimizing intrinsic motivation does nothing to increase their value function in the long run, because the adjustment term is at the very end of a given episode. This extends the reward horizon, to use the terminology of \citet{laud2004theory}, or the time delay between an action and the (intrinsic) returns dependent on that action. This makes it intentionally difficult for the agent to discover that IM doesn't ever actually affect the final return of an episode (because an appropriately discounted quantity will always be deducted later). Much work has gone into the goal of shortening the reward horizon on various problems, oftentimes through reward shaping terms (see, for example, Theorem 3 of \citet{Ngthesis}), but this work shows that actually \emph{increasing} the reward horizon for the futility of pursuing IM can be useful---it allows these rewards to still give hints to the agent, without being immediately discovered as ``worthless'' in the long run. The agent will then seek these rewards in the short term, but discard them in the long term insofar as optimizing for them would deviate from an optimal policy.

For the formal proof that Equation \ref{naive_conversion} leaves an optimal policy unaltered, we must make a single assumption about $F_t$ that limits the scope of rewards our method applies to:
\begin{assumption} \label{not_empowerment}
    $F_t$ is constant with respect to $a_{t'>t} \forall t, t' \in (0,1,...N-1).$
\end{assumption}
This assumption is quite general, and holds for the majority of IM in the literature, including state-of-the-art exploration methods such as ICM and RND. Note that this assumption generally holds for action-dependent IM, so long as that action-dependence does not extend to \emph{future} actions, but is restricted to actions taken by the agent at the current time step and/or prior ones. The key example in the literature for which this assumption does not hold is empowerment \cite{mohamed2015variational}, in which states are given intrinsic weight that is based in part on future actions. Addressing these sorts of IM terms is left to future work; in this paper, we focus on the bulk of IM, for which our method is appropriate. We can now prove Theorem \ref{theorem_2}:
\begin{thm}[PBIM Preserves Optimality]\label{theorem_2}
The addition of a shaping reward $F'_t$ of the form in Equation \ref{naive_conversion} leaves the set of optimal policies unchanged if Assumption \ref{not_empowerment} holds.
\end{thm}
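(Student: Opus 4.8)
The plan is to apply Theorem~\ref{extending_thm}: it suffices to show that the potential $\Phi_t$ implicitly defined by Equation~\ref{naive_conversion} --- namely the $\Phi_t$ of Equation~\ref{initial_potential} --- satisfies the boundary condition of Equation~\ref{boundary_condition}, i.e. that
\begin{equation*}
    \mathop{\mathbb{E}}_{a \sim \pi, s \sim T, R_n \sim R}\left(U_t^\pi - \gamma^{N-t}U_N^\pi\right)
\end{equation*}
is constant with respect to $a_t$ for every $t \in (0,1,\dots,N-1)$. Since $U_N^\pi = 0$ in the episodic setting (there are no rewards collected at or after the terminal step), this reduces to showing that $\mathop{\mathbb{E}}(U_t^\pi)$ is $a_t$-independent --- with the single exception at $t=N-1$, where Equation~\ref{naive_conversion} uses the modified terminal potential $\Phi_N = -U_0^\pi/\gamma^N$ rather than $-U_N^\pi$, so there the relevant quantity is $\mathop{\mathbb{E}}(U_{N-1}^\pi - \gamma^{N-1-N}\cdot\gamma^N\cdot(U_0^\pi/\gamma^N))$-type expression; this case I would handle by the design choice stated after Equation~\ref{initial_potential}, namely that $\Phi_N$ was chosen precisely so that the $t=0$ instance of Equation~\ref{boundary_condition} is satisfied with $\Phi'_0 = 0$.

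The substantive step is the $a_t$-independence of $\mathop{\mathbb{E}}(U_t^\pi)$ for $0 < t < N-1$ (and of the combined expression for $t=N-1$), and this is exactly where Assumption~\ref{not_empowerment} enters. First I would expand $U_t^\pi = \sum_{n=t}^{N-1}\gamma^{n-t}F_n$ and push the expectation inside the sum. For each term $F_n$ with $n > t$: by Assumption~\ref{not_empowerment}, $F_n$ does not depend on $a_{t'>n}$... but more to the point I need that the \emph{expectation} of $F_n$, taken over the trajectory distribution induced by $\pi$, does not depend on $a_t$. The way I would argue this: conditioned on the full history up to time $n$ (states and actions $s_0,a_0,\dots,s_n$), the value $F_n$ is a function of variables none of which is a future action, so $F_n$ is determined by (a function of) that history; and the distribution over that history, given that we are at state $s_t$ at time $t$, is governed by $T$ and $\pi$ for steps $t,t+1,\dots,n-1$ --- the choice of $a_t$ at the current step is the decision variable we are optimizing, and $\mathop{\mathbb{E}}(U_t^\pi)$ as written already integrates over $a_t\sim\pi$ as well. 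Here I would need to be careful about exactly what "$a_t$-independent" means in Equation~\ref{boundary_condition}: re-reading, $\Phi'_t$ is "constant with respect to $a_t$," so what must be shown is that if we fix the policy $\pi$ and condition on being at state $s$ at time $t$ and then vary only the immediate action $a_t$ (holding $\pi$ fixed for the future), the conditional expectation $\mathop{\mathbb{E}}(U_t^\pi \mid s_t = s, a_t)$ does not vary with $a_t$. For $n=t$ the term $\gamma^0 F_t$ \emph{does} depend on $a_t$ in general --- so this is precisely why the naive choice fails and why the $t=N-1$ correction and the terminal-potential adjustment are needed. I would therefore need to track the $F_t$ term explicitly and show it telescopes against the terminal correction; concretely, the $-\Phi_N\gamma^{N-t}$ contribution at the boundary reintroduces $-\sum_{n=0}^{N-1}\gamma^{n-t}F_n$-type terms that cancel the offending action-dependent pieces, leaving only history-dependent (hence, given $s_t=s$, $a_t$-independent) quantities.

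Concretely, the key steps in order: (i) invoke Theorem~\ref{extending_thm} to reduce the claim to verifying Equation~\ref{boundary_condition} for the potential of Equation~\ref{initial_potential}; (ii) substitute Equation~\ref{initial_potential} into Equation~\ref{boundary_condition} and use $U_N^\pi=0$ to simplify, treating $t<N-1$ and $t=N-1$ separately (the latter picking up the $-U_0^\pi/\gamma^N$ term); (iii) for $t<N-1$, expand $U_t^\pi$ as a discounted sum of $F_n$, $n\ge t$, and use Assumption~\ref{not_empowerment} plus the Markov structure of $T$ to argue each $\mathop{\mathbb{E}}(F_n)$ for $n\ge t$ is, conditioned on $s_t=s$, independent of the choice of $a_t$ --- wait, this is false for $n=t$, so instead (iii') show that the full $\Phi'_t := \mathop{\mathbb{E}}(U_t^\pi)$ is merely \emph{some} function constant in $a_t$ only after recognizing that Equation~\ref{naive_conversion}'s shaping reward differs from $F_t$ only at $t=N-1$; more carefully, re-derive via the telescoping identity of Section~\ref{extending_sec} that $U_{M',t}^\pi = U_{M,t}^\pi + \gamma^{N-t}\Phi_N - \Phi_t$ with $\Phi_t = -U_t^\pi$, so $\mathop{\mathbb{E}}(U_{M',t}^\pi) = \mathop{\mathbb{E}}(U_{M,t}^\pi) + \mathop{\mathbb{E}}(U_t^\pi) - \gamma^{N-t}\mathop{\mathbb{E}}(U_0^\pi)/\gamma^N$, and then show the two $U^\pi$ terms combine so that the $a_t$-dependence matches exactly that of $\mathop{\mathbb{E}}(U_{M,t}^\pi)$'s intrinsic part, which is what gets optimized identically in both MDPs; and (iv) for $t=N-1$, verify directly that the correction term in Equation~\ref{naive_conversion} makes the sum collapse to an $a_{N-1}$-independent constant. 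The main obstacle I anticipate is step (iii)/(iii'): pinning down precisely the sense in which "$\mathop{\mathbb{E}}(U_t^\pi)$ is constant with respect to $a_t$" given that the $n=t$ summand genuinely depends on $a_t$ --- the resolution must be that Assumption~\ref{not_empowerment} guarantees the action-dependence of each $F_n$ is "already priced in" to the extrinsic $Q$-function comparison, so that the $\argmax_{a_t}$ over $Q_{M',t}^*$ and over $Q_{M,t}^*$ coincide; getting this bookkeeping exactly right, and making sure the terminal correction at $N-1$ (not $N$, since there is no action at $N$) lands on the correct time step, is the delicate part.
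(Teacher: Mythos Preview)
Your overall plan---reduce to Theorem~\ref{extending_thm} by verifying the boundary condition~\eqref{boundary_condition} for the potential in Equation~\eqref{initial_potential}---is correct. The gap is in the substitution. You write the quantity to be checked as $\mathop{\mathbb{E}}\bigl(U_t^\pi - \gamma^{N-t}U_N^\pi\bigr)$, which is what one would obtain by taking $\Phi_N = -U_N^\pi$. But that is \emph{not} the potential of Equation~\eqref{initial_potential}: there $\Phi_N = -U_0^\pi/\gamma^N$ for the single index $N$, and this value of $\Phi_N$ enters the boundary condition $\gamma^{N-t}\Phi_N - \Phi_t$ at \emph{every} $t$, not only at $t=N-1$. (The place where $\Phi_N$ enters the \emph{shaping reward} $F'_t=\gamma\Phi_{t+1}-\Phi_t$ is indeed only $t=N-1$; but after telescoping, $\Phi_N$ survives in the boundary expression for all $t$.) This mis-substitution is what forces you into an artificial case split and into trying to show $\mathop{\mathbb{E}}(U_t^\pi)$ alone is $a_t$-independent---which, as you correctly notice, fails because of the $F_t$ summand.

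With the correct substitution the argument is a two-line telescoping identity and no case analysis is needed. For every $t\in\{0,\dots,N-1\}$,
\[
\gamma^{N-t}\Phi_N - \Phi_t \;=\; -\frac{U_0^\pi}{\gamma^{t}} + U_t^\pi
\;=\; -\sum_{n=0}^{N-1}\gamma^{n-t}F_n \;+\; \sum_{n=t}^{N-1}\gamma^{n-t}F_n
\;=\; -\sum_{n=0}^{t-1}\gamma^{n-t}F_n,
\]
a discounted sum of the \emph{past} intrinsic rewards $F_0,\dots,F_{t-1}$ only. Assumption~\ref{not_empowerment} says each $F_n$ with $n<t$ is constant in $a_t$, so this sum is $a_t$-independent and Equation~\eqref{boundary_condition} holds with $\Phi'_t = -\sum_{n=0}^{t-1}\gamma^{n-t}F_n$; Theorem~\ref{extending_thm} then finishes the proof. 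You were circling this in your attempt~(iii$'$)---the expression $\mathop{\mathbb{E}}(U_t^\pi)-\gamma^{N-t}\mathop{\mathbb{E}}(U_0^\pi)/\gamma^N$ is exactly the right one---but you did not recognize that it already handles all $t$ uniformly and collapses to a history-only sum, which is precisely the ``bookkeeping'' you flagged as unresolved.
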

\begin{proof}
    The potential of Equation \ref{naive_conversion} takes the form of Equation \ref{initial_potential}. With this choice of potential, the left side of Equation \ref{boundary_condition} becomes, in expectation,
\begin{align}
    U_t^\pi - \frac{U_0^\pi}{\gamma^t} &{= \sum_{n = t}^{N-1} \gamma^n F_n  - \sum_{n = 0}^{N-1} \gamma^{n-t} F_n }\\
    &{= \left( \sum_{n = t}^{N-1} \gamma^n F_n - \sum_{n = t}^{N-1} \gamma^n F_n\right) - \sum_{n=0}^{t-1} \gamma^{n-t} F_n }\\
    &{= - \sum_{n=0}^{t-1} \gamma^{n-t} F_n.}
\end{align}
This term depends simply on the discounted sum of all the IM rewards up to, but not including, $F_t$. From Assumption \ref{not_empowerment}, this has no $a_t$-dependence, and thus Equation \ref{boundary_condition} holds. From this, Theorem \ref{extending_thm} can be applied to show the optimal policy set is unchanged.
\end{proof} 

While we have just shown it does conserve the optimal policy, this form of PBIM has some potentially undesirable effects in practice. In particular, it may still bias the agent in the short term not only to prefer intrinsic rewards, but also to (temporarily) learn some false relationships between the reward distribution of the state space that then need to be unlearned. Particularly, if $F_t$ is consistently positive (as is the case with most exploration-based IM), then $F'_t$ will also tend to be consistently positive, \emph{except for in the last time step of an episode}, where it will be extremely negative, in order to offset the cumulative positive reward. This may cause an agent to initially learn that areas of the state space it is in towards the end of an episode are ``bad'', and areas of the state space towards the beginning of an episode are ``good.'' While Theorem \ref{theorem_2} ensures these associations will eventually be unlearned, we would prefer to not learn them to begin with, as they may needlessly slow down training: particularly in exploration-focused environments, where they are often precisely the opposite of true. To mitigate this potential issue, we introduce a normalized variation of PBIM, in which $F_t$ is replaced with
\begin{equation} \label{conversion}
F'_t = \begin{cases}  \sum_{n = 0}^{N-2} -\gamma^{n+1-N}F'_n, &  \text{if } t = N -1\\  F_t - \bar{F}, &  \text{if } t \neq N-1, \end{cases}
\end{equation}
where $\bar{F}$ is the expectation value of $F$ across prior training. This modified form ensures that the expected IM for both final and non-final time steps is 0, and thus these undesirable associations will not occur. In practice, $\bar{F}$ is calculated by taking a running mean of the previous intrinsic rewards collected across all workers during a single training epoch. 

The corresponding potential for Equation \ref{conversion} is
\begin{equation} \label{phi_set}
    \Phi_n = \begin{cases} \frac{-U'_0}{\gamma^N}, & \text{if } n = N \\ -U_n -\frac{\bar{F}}{\gamma - 1}, & \text{if } n \neq N, \end{cases}
\end{equation}
where $U'_0$ is the cumulative discounted mean-adjusted intrinsic return.
 The first case of this correspondence follows straightforwardly from the definition of the intrinsic return. The second case gives us back:
\begin{align}
    F'_{n \neq N} &= \gamma \Phi_{n+1} - \Phi_n \\
             &= \gamma ( - U_{n+1} - \frac{\bar{F}}{\gamma - 1}) + U_n + \frac{\bar{F}}{\gamma - 1}\\
             &= F_n - \frac{\gamma \bar{F}}{\gamma - 1} + \frac{\bar{F}}{\gamma - 1} \\
             &= F_n - \frac{\gamma-1}{\gamma-1}\bar{F} \\
             &= F_n - \bar{F}.
\end{align}
Because this $F_{n \neq N}$ term has an expected value of zero, $F_N$ will then similarly have an expected value of zero, as it will simply be the discounted sum of quantities with expectation zero. Additionally, because in either case, Equation \ref{conversion} differs from Equation \ref{naive_conversion} only by the addition of a constant factor of $\bar{F}$, and $\bar{F}$ is never dependent on the action $a_t$, this formulation satisfies Equation \ref{boundary_condition} as well. So we now have two formulations -- Equation \ref{naive_conversion} and Equation \ref{conversion} -- that can be used to implement IM without changing the optimal policy set of the underlying environment.

\section{Empirical Demonstration}\label{empirical}
We empirically demonstrate the efficacy of our method for both an exploration-based tabular IM reward and for Random Network Distillation (RND) \cite{burda2018exploration}. We focused on environments with a knowable set of optimal policies where the base IM demonstrably alters performance and other IM approaches not guaranteeing optimality fail to converge towards an optimal solution. The former of these demonstrations shows our method's potential to speed up convergence when compared to either a baseline IM or no IM, while the latter demonstrates our method's ability to preserve an agent's convergence to an optimal policy, even with an IM that would otherwise explicitly alter the optimality of that policy.
\subsection{MiniGrid DoorKey}\label{minigrid}

We demonstrate an improvement in both speed of convergence and performance of the converged-to policy when using our method to confer optimality guarantees to a tabular exploration reward term in the MiniGrid DoorKey 8x8 environment \cite{minigrid}. This environment
challenges the agent to reach a goal state in the bottom-right corner by picking up a key, carrying it to a door, then unlocking that door. The environment itself has sparse rewards, returning a reward of 1 for successfully reaching the goal and 0 for every other transition. It is also partially observable, as the agent can only see at most in a 7x7 grid in front of it. The maximum episode length in this environment is 640 steps.

We used a tabular exploration reward of the form
\begin{equation} \label{tabular_reward}
    F_t = \frac{\alpha}{n(s)},
\end{equation}
where $n(s)$ is the number of times a state has been previously visited within an episode, and $\alpha$ is a coefficient controlling the magnitude of exploration reward relative to the environment reward. 
This particular tabular reward form appears in previous literature \cite{strehl2008analysis, burda2018exploration}.  As the environment itself is partially observable, and we wish to demonstrate the versatility of our method when applied to reward functions with dependence on arbitrary variables, we defined the ``state'' counter $n_t$ not based on the agent's observation space, but on information internal to the environment itself regarding the agent's position and whether it was holding the key. So, for example, the first time the agent visited state $\{3,4,0\}$ (\{``3rd vertical position,'' ``4th horizontal position,'' and ``not carrying the key''\}), it would receive a reward of $1$, followed by $\frac{1}{2}$, $\frac{1}{3}$, etc. To avoid a combinatorial explosion, the agent's direction and the door's status (locked/unlocked/open) were not incorporated into this reward.

Our experiment was in the Minigrid Doorkey 8x8 environment with a tabular exploration reward. 
We used the PPO algorithm as introduced by \citeay{schulman2017proximal}. We included an LSTM layer \cite{hochreiter1997long} in the network architecture to deal with the non-Markovian nature of the environment.
We tested four reward schemes: our method as implemented in Equation~\ref{conversion}, base intrinsic rewards without PBRS, our method as implemented in Equation~\ref{naive_conversion}, and control, which received no IM. We consider a policy to converge when it does not truncate an episode without reaching the goal state during the last 1,000 training steps.

We tested for three different sets of parameters $\alpha,\gamma$, meant to represent settings where the IM reward changes the optimal policy infrequently, sometimes, and often.\footnote{We include details of the effects of these parameters on the optimal policy in the Appendix.} Episode lengths for each of these are depicted in Figures~\ref{frame_results_005},~\ref{frame_results_02},~\&~ \ref{fig:025-full}, respectively. Shaded regions represent standard deviation among the 16 processes, rather than error: this is expected to be high, because MiniGrid environments are procedurally generated and variance in the optimal path length from one episode to another is expected. Tables~\ref{.005},~\ref{.02},~\&~\ref{.025} contain the time to convergence $T$ (if converged, N/A if they did not converge), and mean episode length $\bar{N}$ and standard deviation $\sigma$ after convergence for each reward scheme. $\bar{N}$ and $\sigma$ were calculated from the last 1,000 data points. $T$ was determined by the first time step in which the average episode length falls below $\bar{N}$ for that run. For each pair of results that converged we performed a 1-sided T-test, and
with one exception noted in Figure~\ref{frame_results}'s caption, all
differences were highly statistically significant.

\begin{figure*}[t]
    \centering
    \begin{subfigure}{0.24\textwidth}
        \centering
        \includegraphics[width=\textwidth]{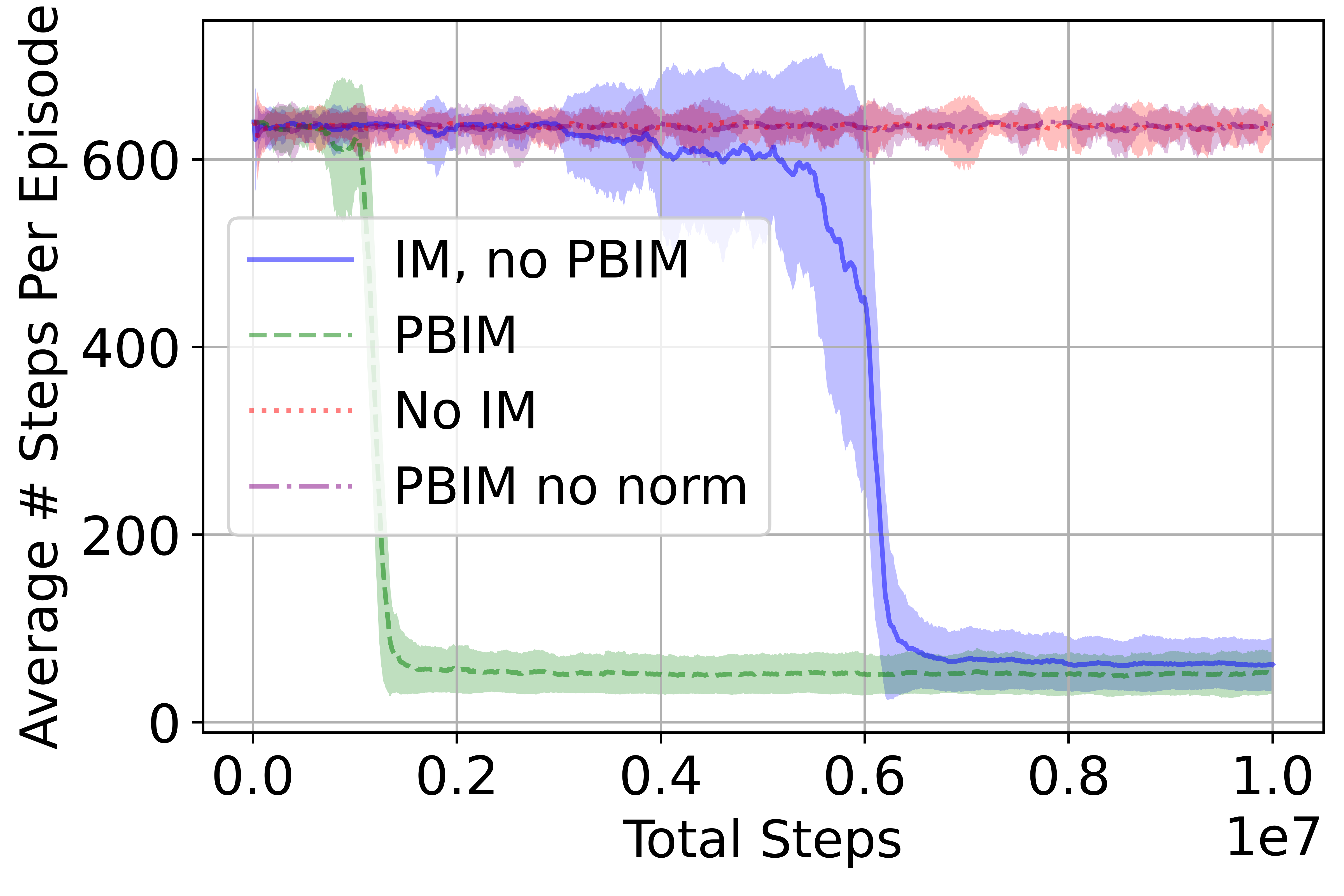}
        \caption{$\alpha = .025$, $\gamma = .995$}
        \label{fig:025-full}
    \end{subfigure}
    \begin{subfigure}{0.24\textwidth}
        \centering
        \includegraphics[width=\textwidth]{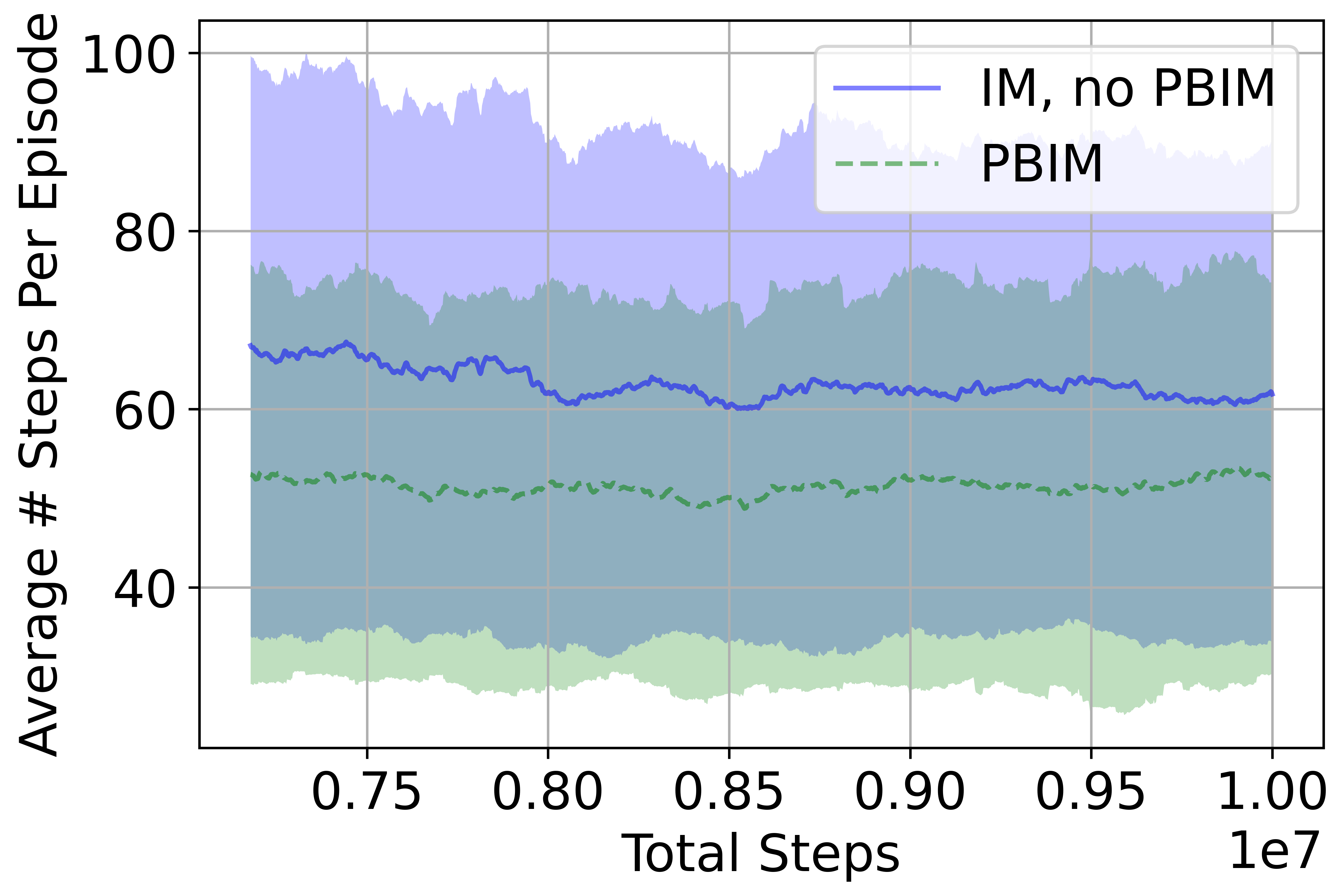}
        \caption{$\alpha = .025$, $\gamma = .995$, zoomed}
        \label{fig:zoomed}
    \end{subfigure}
    \begin{subfigure}{0.24\textwidth}
        \centering
        \includegraphics[width=\textwidth]{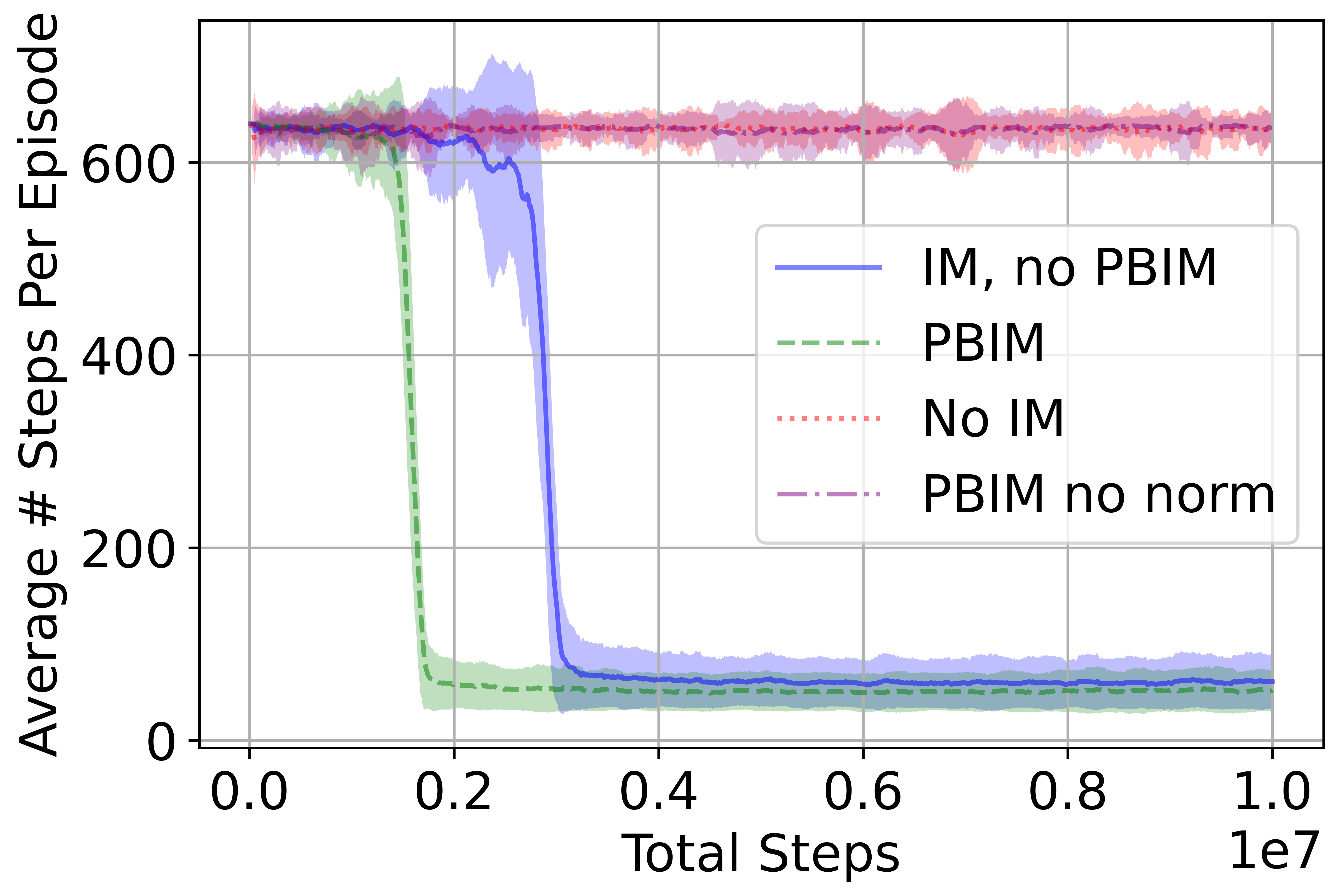}
        \caption{$\alpha = .02$, $\gamma = .995$}
        \label{frame_results_02}
    \end{subfigure}
    \begin{subfigure}{0.24\textwidth}
        \centering
        \includegraphics[width=\textwidth]{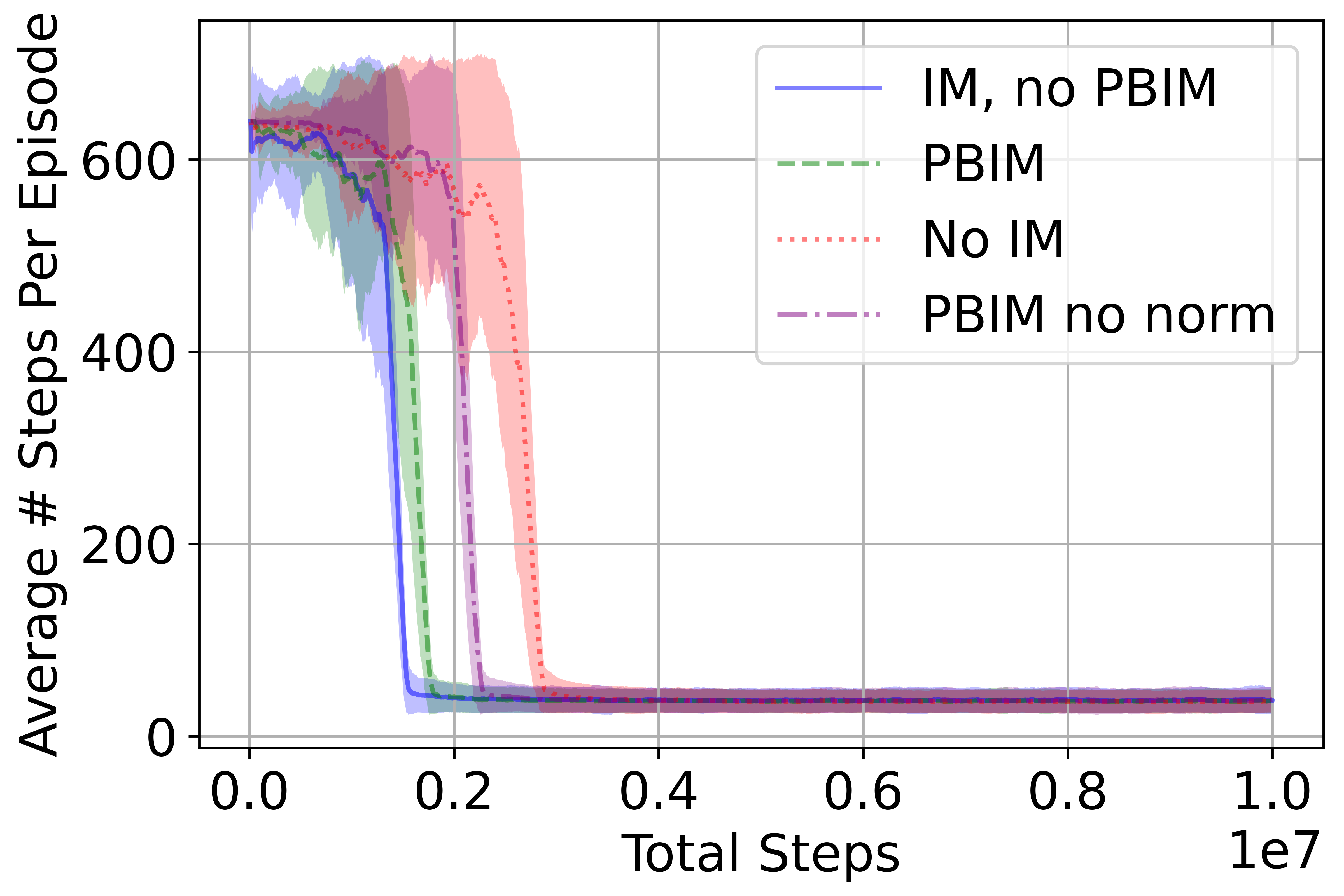}
        \caption{$\alpha = .005$, $\gamma = .99$}
        \label{frame_results_005}
    \end{subfigure}
    \caption{(\subref{fig:025-full}), (\subref{frame_results_02}), \& (\subref{frame_results_005}) Frames per episode for each method (lower is better). The shaded region represents standard deviation, and plots are of a 100-point moving average. \subref{fig:zoomed}) Same results as (\subref{fig:025-full}), but zoomed in. All differences in means are significant. For IM + PBRS, IM no PBRS (\subref{fig:025-full}) $T = 36.5, p < 0.01$. For IM + PBRS, IM no PBRS (\subref{frame_results_02}) $T = 27.4, p < 0.01$. In (\subref{frame_results_005}), No IM converges lower than IM + PBRS, which converges lower than IM + PBRS no norm, which converges lower than IM no PBRS. Respectively, for each of these pairings, $T = 4.3, p < 0.01$, $T = 6.1, p < 0.01$, and $T = 1.8, p = 0.32$. While the last of these isn't significant, the difference between IM + PBRS and IM no PBRS is, with $T = 7.9, p < 0.01$.}
    \label{frame_results}
\end{figure*}


%
\begin{table*}[t]
    \begin{center}
        \begin{small}
            \begin{sc}
                \begin{tabular}{lccc|ccc|ccc}
                    \toprule
                    & \multicolumn{3}{c|}{$\alpha=0.005$, $\gamma = 0.99$} & \multicolumn{3}{|c|}{$\alpha = 0.02$, $\gamma = 0.995$} & \multicolumn{3}{|c}{$\alpha = 0.025$, $\gamma = 0.995$} \\
                    \midrule
                    & $T$ & $\bar{N}$ & $\bar{\sigma}$ & $T$ & $\bar{N}$ & $\bar{\sigma}$ & $T$ & $\bar{N}$ & $\bar{\sigma}$ \\
                    \midrule
                    PBIM    & 1.67e6& 36.4&12.5 &           1.67e6&51.8 & 22.4 &            1.26e6&51.2& 22.7\\
                    IM, NO PBIM & 1.46e6&37.3&13.1&         2.88e6&60.5&27.5 &              6.07e6& 62.0&27.9\\
                    PBIM NO NORM   & 2.29e6& 37.1 & 13.4&   N/A & 635.9&14.9 &              N/A & 634.7 &18.8\\
                    NO IM   & 2.95e6&35.9&  12.0    &       N/A&634.8&  18.55    &         N/A&634.8&  18.6      \\
                    \bottomrule
                \end{tabular}
            \end{sc}
        \end{small}
    \end{center}
    \caption{Time to convergence ($T$), mean steps per episode after convergence ($\bar{N}$), and average standard deviation of steps per episode after convergence ($\bar{\sigma}$) for three parameter settings. Lower $\bar{N}$ is better.}
    \label{.005} \label{.02} \label{.025}\label{table_minigrid}
\end{table*}

\subsubsection{Discussion}

As can be seen in Figure~\ref{frame_results} and Table~\ref{table_minigrid}, our method consistently outperformed the baseline IM method. Additionally, when the IM most often changes the optimal policy (Figure~\ref{fig:025-full}), our method converges faster than the baseline, and as predicted, the degree to which it outperforms the baseline is correlated with the frequency with which the optimal policy is altered. Note also that our modification of Equation~\ref{naive_conversion} into Equation~\ref{conversion} was key in allowing for convergence in the more difficult environments of Figures~\ref{frame_results_02}~\&~\ref{fig:025-full}, and in outperforming the IM baseline to a statistically significant degree in Figure~\ref{frame_results_005}. We also obtain consistently lower variance than the IM baseline in all experiments: as the level of variance in episode length due to differences in initial conditions is unchanged between runs, this suggests that there is quite a bit of additional variance from inconsistent performance in the baseline IM method that is not present in our method. 

The only experiment in which our (normalized) method did not perform best in both speed of convergence and final policy was with $\alpha = 0.005, \gamma = 0.99$. Here, as can be seen in Table~\ref{table_minigrid}, the best-performing policy after convergence was that trained with no IM at all, and the fastest-converging was that trained with the baseline IM. The latter of these observations can be explained by noting, as discussed in Section~\ref{converting}, that there is a reward horizon for the lack of intrinsic rewards' utility in the PBRS agent, and in simpler environments, there is a risk of this reward horizon being successfully learned before the environment itself is fully solved. If this happens, PBRS can slow the speed of convergence, rather than increase it, by teaching the agent to ignore the IM term prematurely. Note, though, that our method converged to a policy more efficient than that of IM to a significant degree,\footnote{Another point of note is that the converged values in Figure~\ref{frame_results_005} are lower than that of Figures \ref{frame_results_02} and \ref{fig:025-full}. We attribute this to the higher $\gamma$ value making the environment more difficult to learn, meaning that the policies are likely less closely optimal.} and converged more quickly than the run with no IM. In this worst-case scenario our method still provides value by facilitating a trade-off between preventing reward hacking and increasing training efficiency.

\subsection{Cliff Walking}
\label{sec:cw}

We also tested our method in a cliff walking~\cite{sutton2018reinforcement}, a classic reinforcement learning task in which an agent is directed to find a goal state at the end of a long ``cliff'' that must be avoided. Details of this environment are described in the Appendix. 
For this experiment, we trained a simple, tabular Q-learning agent with four types of intrinsic motivation: none, RND\cite{burda2018exploration}, non-normalized PBIM, following Equation \ref{naive_conversion}, and normalized PBIM, following Equation \ref{conversion}. We used an RND predictor network in this environment, to test our method's ability to accommodate complex, non-tabular, state-of-the-art IM terms with dependence on full training trajectories.  More details of this environment can be found in the Appendix.

\subsubsection{Discussion}
Our results in this environment are plotted in Figure \ref{small_cliffwalker}. Episode length is a rough indicator of exploration in the environment. We expect agents to start with very short episodes as they discover falling off the cliff is not ideal, then increase the episode length as they explore the environment, while finally going down as they find the optimal route.

RND, in this environment, doesn't reach the goal, as it becomes ``distracted'' by the intrinsic rewards that can be obtained by exploring the environment unnecessarily---similar to the optimal-policy-changing effect of the tabular exploration term in Section \ref{minigrid}.\footnote{This is not an indictment of RND itself; we shouldn't expect it to be helpful in this small of an environment. We implemented it here partially to see if PBIM could mitigate an intentionally ``bad'' (for this environment) IM term.} Similarly, unnormalized PBIM doesn't reach the goal, likely due to biasing issues discussed in Section \ref{converting}. Normalized PBIM, however, successfully reaches the same average external return as the baseline no-IM runs, suggesting it successfully staves off this change to the optimal policy.

\begin{figure}
    \centering
    \begin{subfigure}{0.49\columnwidth}
        \includegraphics[width=\columnwidth]{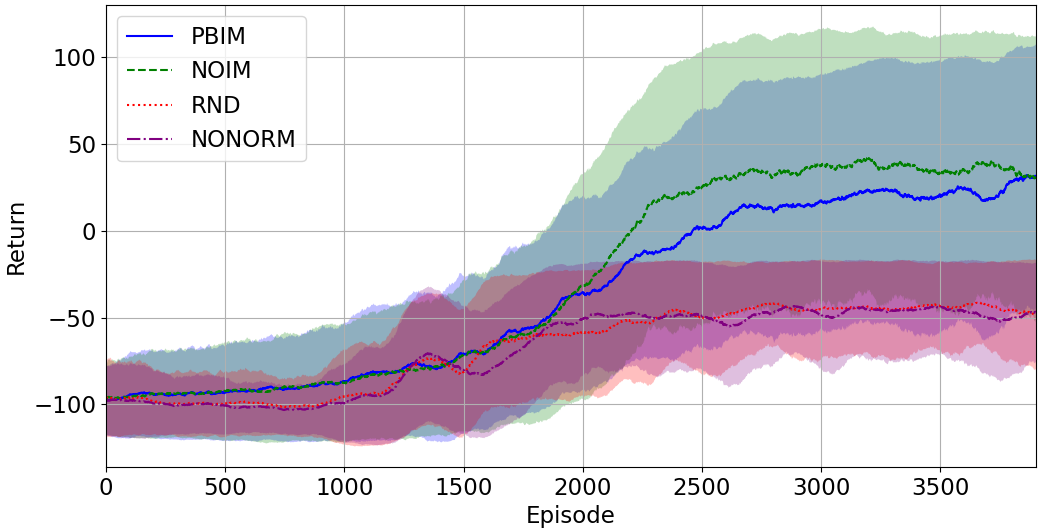}
        \caption{Average Return}
        \label{fig:returnsmall}
    \end{subfigure}
    \begin{subfigure}{0.49\columnwidth}
        \includegraphics[width=\columnwidth]{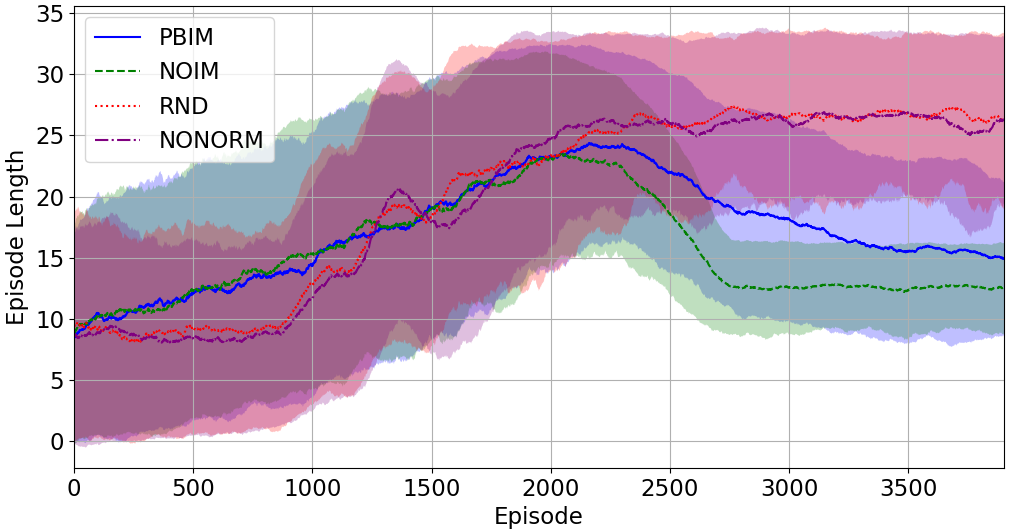}
        \caption{Episode Length}
        \label{fig:lengthsmall}
    \end{subfigure}
    \caption{Average cumulative extrinsic return and episode length for the cliff walking environment. Error bars are standard deviations over 10 runs. Differences in means between returns of No IM and RND ($p < 0.05$) and between returns of No IM and PBIM No Norm ($p < 0.05$) are statistically significant. Mean episode lengths of both PBIM norm and no IM are statistically different from both PBIM no norm and RND ($p < .05$ for all).}
    \label{small_cliffwalker}
\end{figure}
Furthermore, in Figure \ref{Q_values}, it is apparent that normalized PBIM shares a ``critical path,'' one that follows the shortest possible route along the cliffside from the start state to the goal state. This translates, effectively, to equivalent policies, particularly in the deterministic version of this environment that we are using.\footnote{Note that, with the exception of two outliers in infrequently-visited parts of the state space, all differences between \ref{fig:noim} and \ref{fig:pbimnorm} involve the PBIM implementation taking an ``equally correct'' path: in particular, traveling rightwards instead of downwards.} In stark contrast, the other policies fail to converge to any meaningful goal-seeking and have policies that differ starkly from the no-IM version.

\begin{figure}[t]
    \begin{subfigure}{0.49\columnwidth}
        \centering
        \includegraphics[width=\columnwidth]{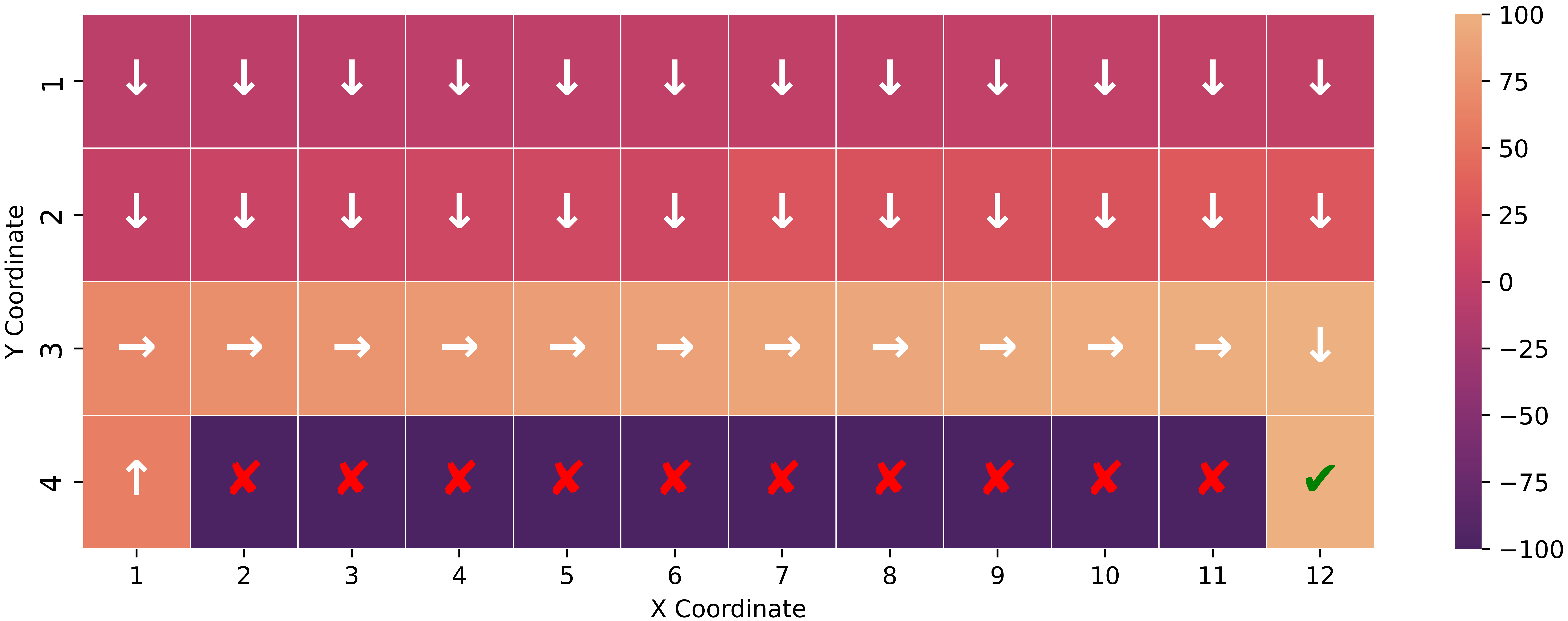}
        \caption{No IM}
        \label{fig:noim}
    \end{subfigure}
    \begin{subfigure}{0.49\columnwidth}
        \centering
        \includegraphics[width=\columnwidth]{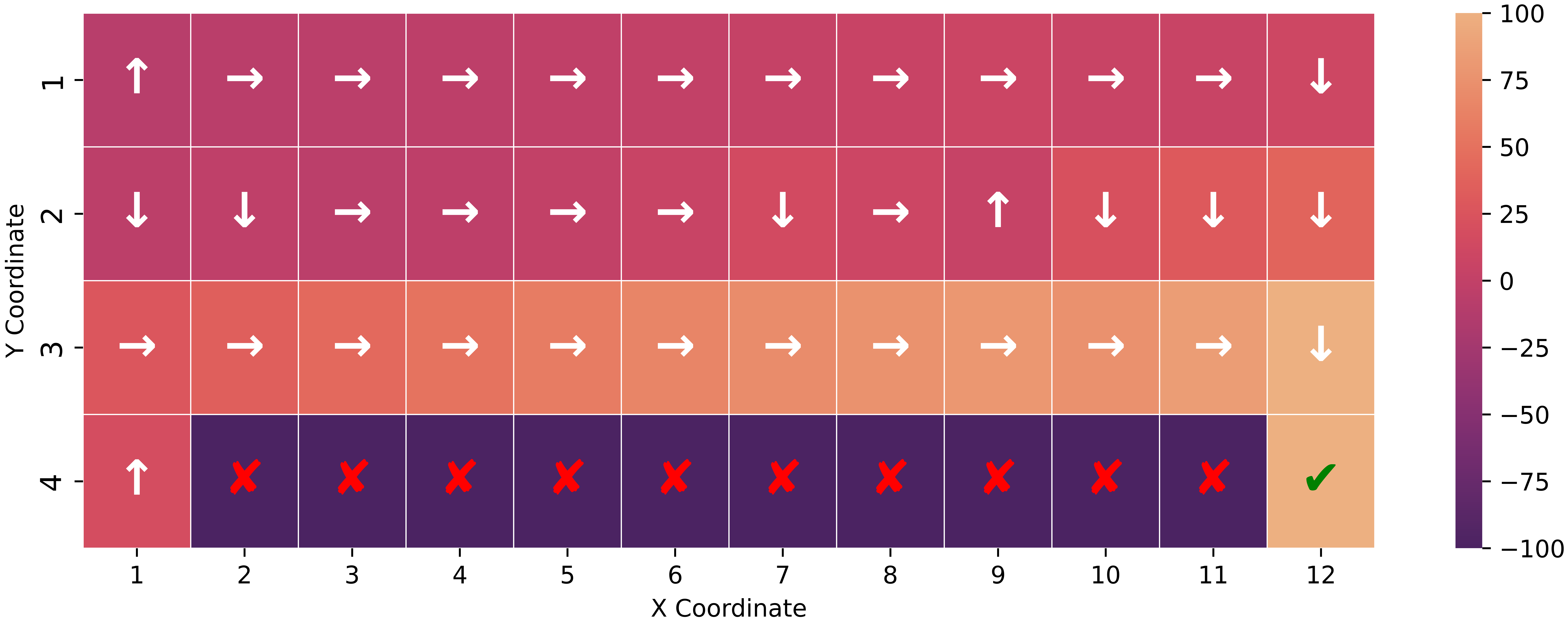}
        \caption{PBIM, normalized}
        \label{fig:pbimnorm}
    \end{subfigure}
    \begin{subfigure}{0.49\columnwidth}
        \centering
        \includegraphics[width=\columnwidth]{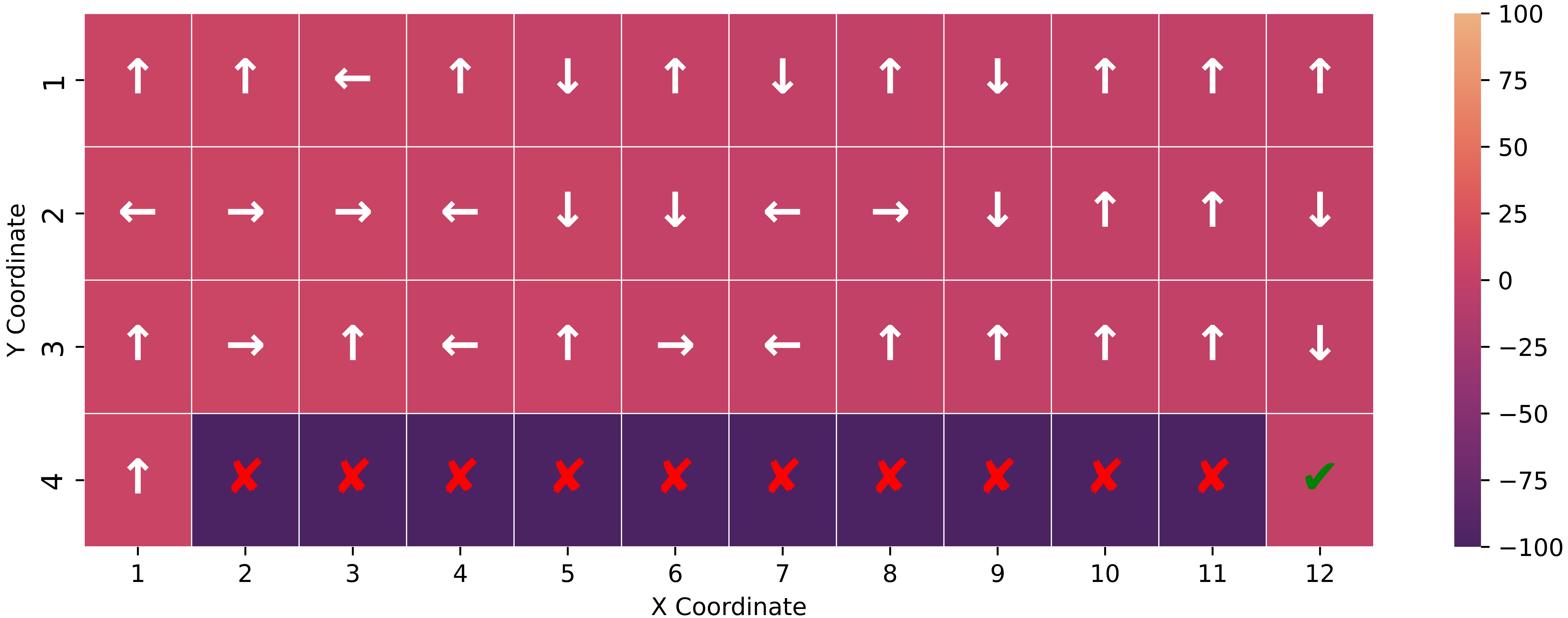}
        \caption{PBIM, not normalized}
        \label{fig:pbimnonorm}
    \end{subfigure}
    \begin{subfigure}{0.49\columnwidth}
        \centering
        \includegraphics[width=\columnwidth]{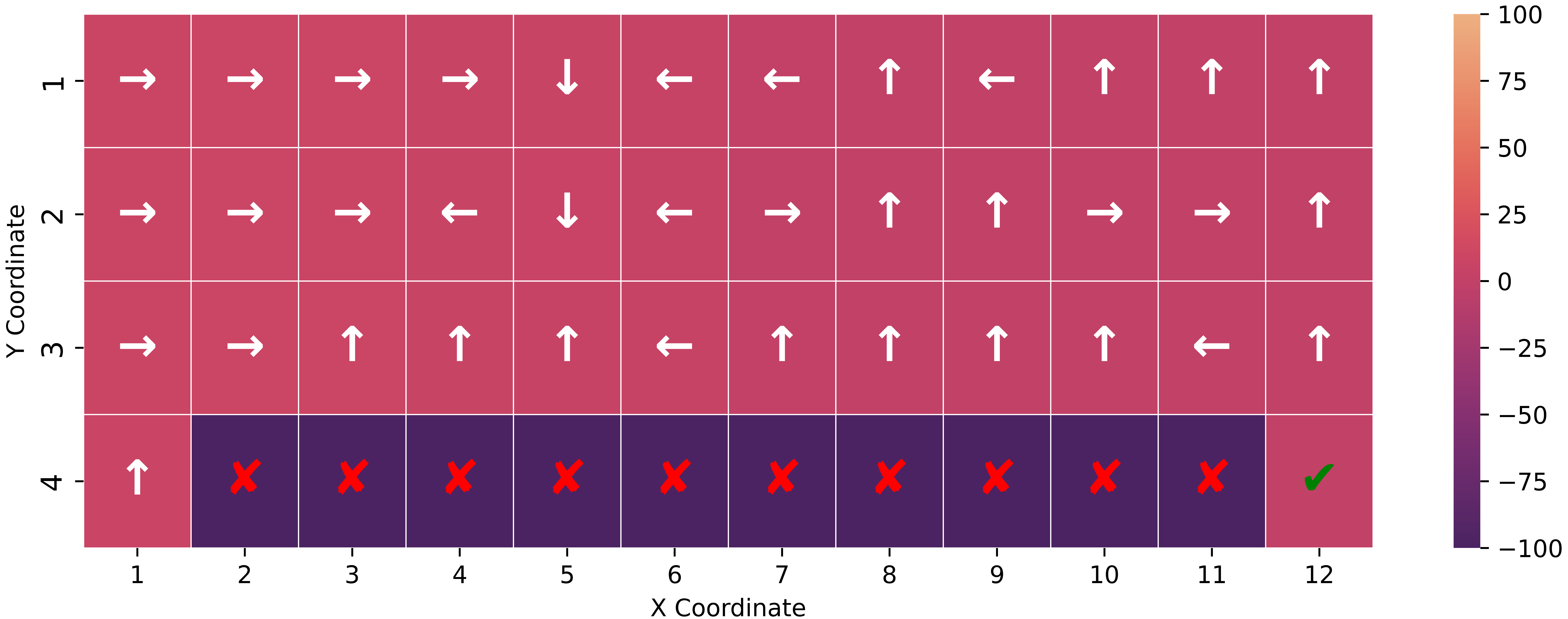}
        \caption{RND}
        \label{fig:rnd}
    \end{subfigure}
    \caption{Final policies of trained agents and their estimated Q-values. Arrows indicate the action with the highest estimated Q-value in each position. A brighter hue indicates a higher Q-value.}
    \label{Q_values}
\end{figure}

\subsection{Longer Cliff Walking}
The environment in Section \ref{sec:cw} was simple enough that IM was not necessary to find a solution. In order to test our method's efficacy in more dauntingly sparse-reward environments that require reward shaping to solve effectively, we evaluate PBIM in a modified version of Cliff Walking with a much longer grid. This version features a $4 \times 50$ grid, where the start and goal are on the leftmost and rightmost tiles in the bottom row, and all other bottom tiles are cliffs.

Details of the changed parameters from those in Section \ref{sec:cw} are in the Appendix: most notably, we increased the number of episodes to train for, as well as the maximum episode length. Figure~\ref{fig:large_cliffwalker} shows the return for the three Q-learning agents trained in this experiment: no IM, RND, and normalized PBIM\footnote{We did not train unnormalized PBIM in this environment due to its poor performance in the similar, simpler environment in Section \ref{sec:cw}.}. Results are aggregated over 10 runs. The agent policies, along with further details, are included in the Appendix.

\begin{figure}
    \centering
        \includegraphics[width=\linewidth]{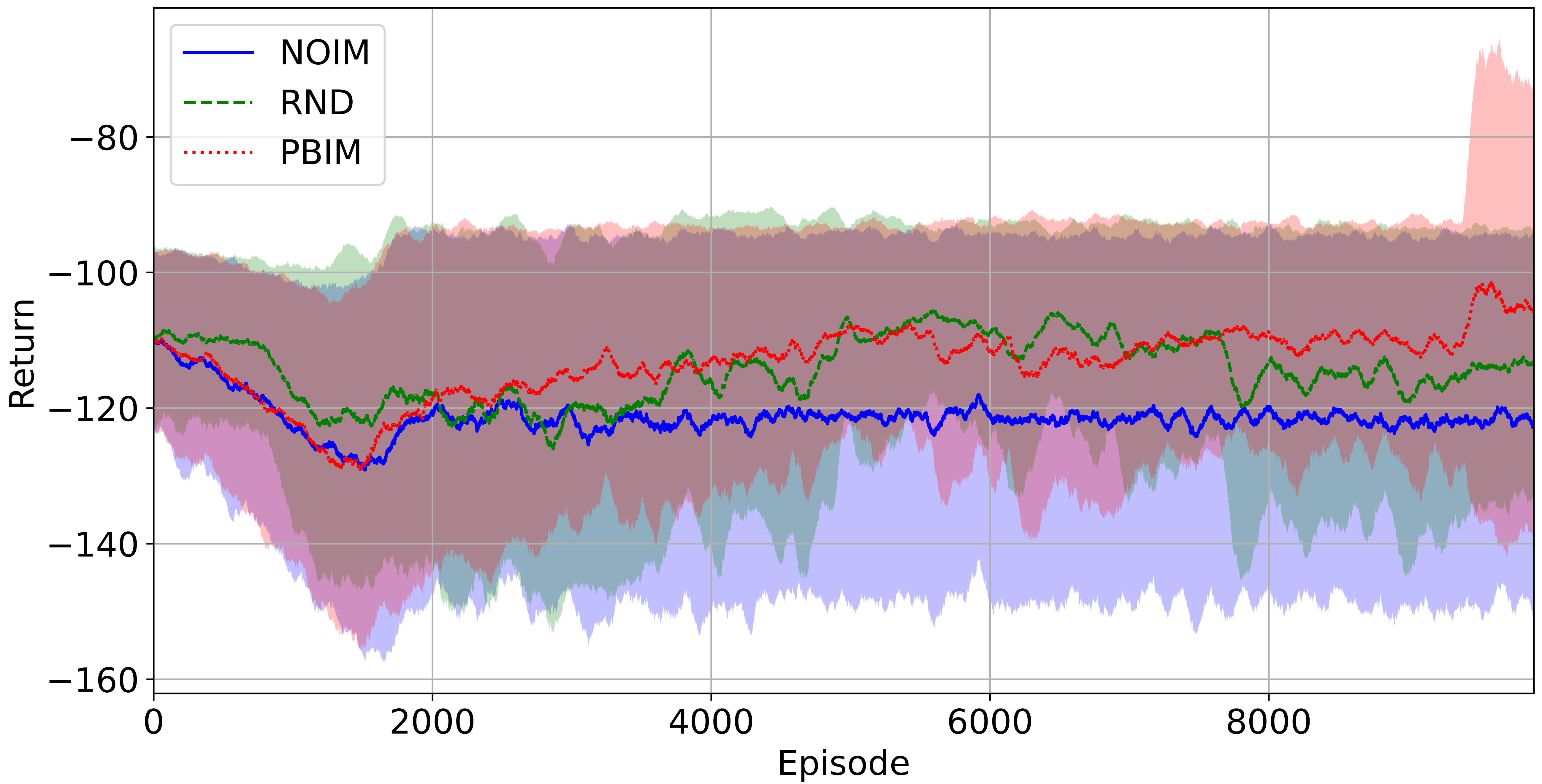}
    \caption{Average cumulative return for the large cliff walking environment. }
    \label{fig:large_cliffwalker}
\end{figure}

The Q-learning agents with IM rewards (RND and PBIM) obtained more average returns 
per episode after 3,000 iterations. This difference above no-IM is statistically significant for PBIM ($p<0.05$), but not for RND, suggesting this is another environment in which PBIM can speed training more efficiently than RND alone.

Note however that, while PBIM is statistically superior to the no IM run, none of the agents, within the time allotted, are able to converge to a consistently stable policy. This can be observed in Figure~\ref{fig:large_cliffwalker}, as the reward values constantly hover below -100, indicating agents on average explore the environment and then fall into the cliff. The PBIM agent, however, began to break out of this cycle near the end of training, while the other two agents remained stuck. This preliminary experiment shows promise for the application of PBIM to more complex sparse-reward environments, as a method of speeding up training over traditional IM. 
However, it also suggests the need for future work (both in this environment and other sparse reward environments), to better refine and characterize the bounds of and conditions for this claim. 

\section{Conclusion}

We've extended PBRS to a more general class of reward functions than has been covered previously in the literature, and proven that important theoretical guarantees---namely, the preservation of the set of optimal policies for the underlying environment---still hold. We have also provided a computationally efficient and effective method of converting many state-of-the-art IM methods into this optimality-preserving form and demonstrated its efficacy at both preventing IM reward hacking and, in some circumstances, accelerating training.

In future work, we are interested in investigating how other forms of IM can be combined with PBIM to positively influence an agent, particularly in more complex environments.

\bibliographystyle{ACM-Reference-Format} 
\bibliography{biblio}
\appendix
\appendix

\section{MiniGrid DoorKey Environment}

An example of a state from the environment we used is given in Figure \ref{doorkey}.

For the potential-based method, we implemented Equations 30 and 34, taking $F_n$ from Equation 41. We used a moving average of the rewards for each run to compute $\bar{F}$.

We ran 16 processes concurrently for 10 million cumulative steps, and recorded performance every 128 steps. 

It is worth expanding on the some of particular ways that the IM in this experiment changed the set of optimal policies in this environment. With Equation 41, the particular intrinsic reward we used in this environment---though certainly not unique to it; see \cite{burda2018large} as well as Section 4.2---there is a danger of the agent ``procrastinating,'' and collecting more exploration rewards than are strictly necessary to reach the goal state, rather than taking the most efficient action at each time step. Let's examine a simple version of this: if an agent can either take the most efficient action to proceed toward the goal state, or wait for $t$ time steps before doing so (say, by either alternating ``turn left'' and ``turn right'' actions, or by taking the ``toggle'' action repeatedly when there is no door to toggle), then which course of action is preferred to the other depends on its intrinsic reward for the current grid tile. If it has visited its current tile a total of $n$ times (including the current time step), then stalling for $t$ time steps before proceeding to the goal becomes strictly preferred to the optimal action if the condition
\begin{equation} \label{procrastination_condition}
    1-\gamma^t < \sum_{t' = 0}^{t-1}\alpha \gamma^{t'}\frac{1}{n+t'+ 1}
\end{equation}
is met. Here, the left side of Equation \ref{procrastination_condition} refers to the cost of delaying the step at which it will reach the goal state by $t$ time steps, and the right side refers to the intrinsic reward gained by doing the same. 

Thus the set of optimal policies in this environment can be altered by the shaping reward in Equation 41 for sufficiently high values of $\alpha$ and $\gamma$. To test how violations of this inequality affect performance, we tested at values of $\alpha$ and $\gamma$ which vary in the frequency with which we can expect Equation~\ref{procrastination_condition} to be satisfied. In particular, higher values of both $\alpha$ (a positive coefficient in the right side of Equation \ref{procrastination_condition}) and $\gamma$ (positive on the right side of Equation \ref{procrastination_condition} and negative on the right side) will increase the frequency of states in which the optimal policy will be altered by this intrinsic reward.

We iterated on the initial codebase of \cite{minigrid_code} for our implementation.

\begin{figure}
    \centering
    \includegraphics[width=0.8\columnwidth]{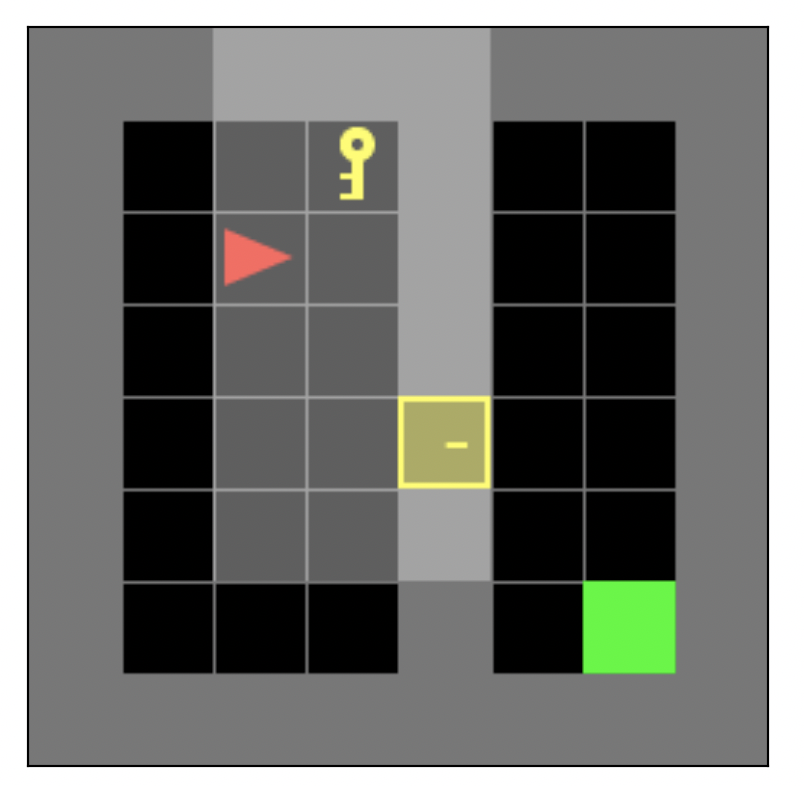}
    \caption{An example MiniGrid DoorKey 8x8 environment.}
    \label{doorkey}
\end{figure} 
\section{Cliff Walking Environment}  
Here, we detail the specifics of the cliffwalking experiment.
\subsection{Environmental Setup}
Figure~\ref{fig:cw} shows the grid and mechanics of the cliff walking task. 
In this environment, the agent starts in the bottom left tile, marked with an S, and must reach the bottom right tile, marked with a G. All other tiles in the bottom row are ``cliffs.'' At every time step, the agent can move either up, down, left, or right. Entering cliff tiles returns a reward of -100, and reaching the goal tile instead returns 100. Every other action has a reward of -1. The episode ends when either the agent reaches a cliff or goal tile, or when the maximum episode length is reached. 
In Figure~\ref{fig:cw}, the red arrow indicates the optimal path\footnote{We use a deterministic implementation of the environment, in which the ``slip chance'' is set to 0. In nondeterministic versions of the environment, depending on the value of this parameter and the penalty for moving into a cliff tile, the blue path may actually be optimal. We use the deterministic implementation partially because it's easier to demonstrate and think about optimality here.}, and the blue arrow shows the longer path that an agent might take to ensure it avoids the cliffs.

\begin{figure}
    \centering
    \includegraphics[width=\linewidth]{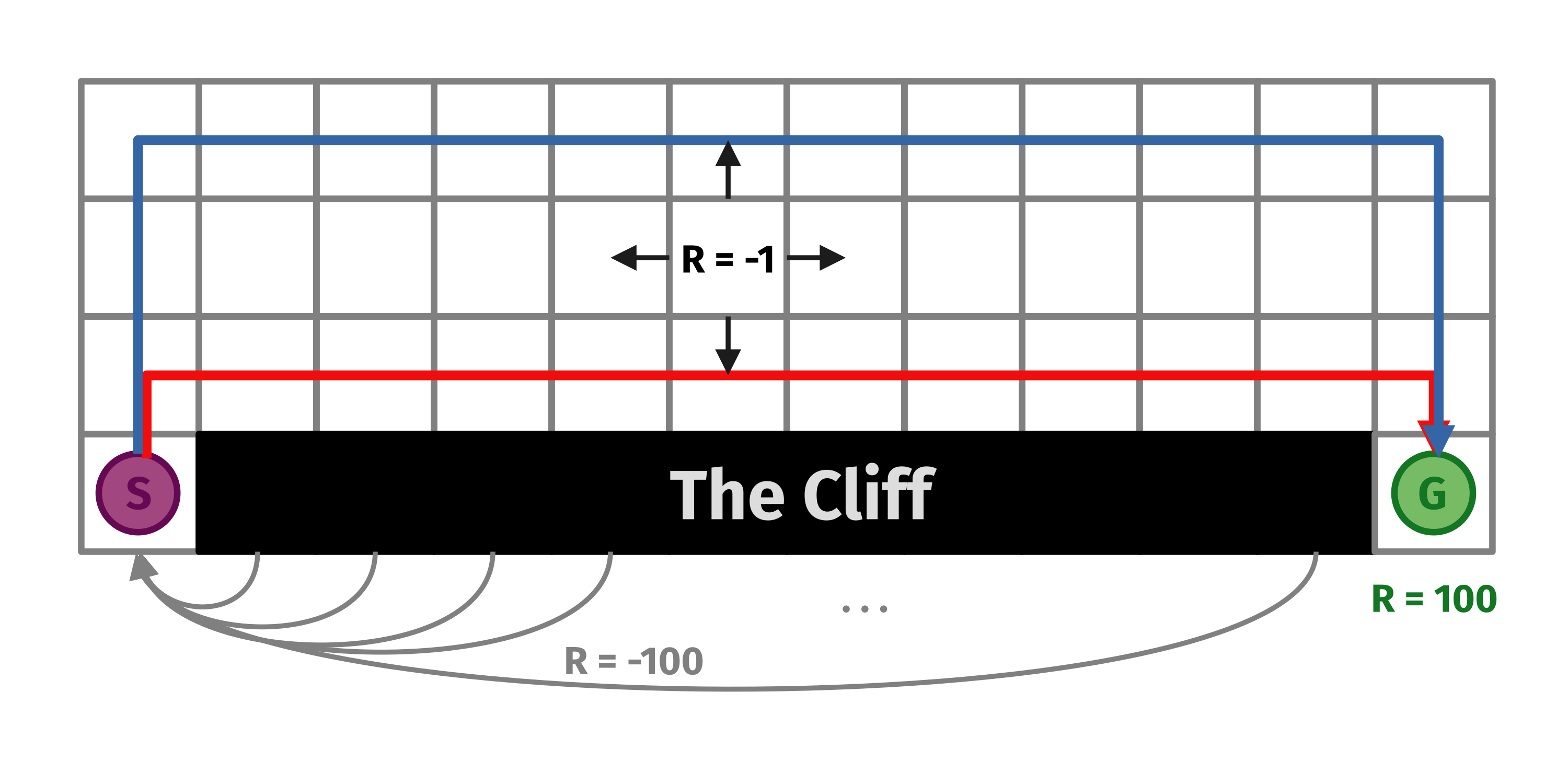}
    \caption{Cliff walking scenario (adapted from~\cite{sutton2018reinforcement}).}
    \label{fig:cw}
\end{figure}

\subsection{Hyperparameters}
The q-learning agents were trained for 4000 episodes with $\gamma = 0.99$, $\epsilon = 1.0$ decreasing by $4 \cdot 10^{-4}$ after every episode down to a minimum of $0.1$. Due to the small observation space, we used an RND predictor network learning rate of $10^{-6}$ and an intrinsic reward factor\footnote{RND returns an intrinsic reward based on the agent's ability to navigate to new states that return a high prediction error for a separately trained network. Because this was a simple deterministic environment, there was a danger of the predictor network quickly converging to zero prediction error, effectively eliminating the difference between the models we were testing. We set these parameters to counteract this effect, and maintain a consistently nonzero IM, in order to adequately test our method.} of $1000$.

\section{Long Cliff Walking}
We used the same experimental setup as the Cliff Walking Environment, but instead increased the number of episodes to 10,000 and the maximum steps per episode to 100. The agents were trained with $\gamma = 0.99$, $\epsilon = 1.0$ decreasing by $5 \cdot 10^{-4}$ after every episode down to a minimum of $0.1$. We used the same RND configuration: a learning rate of $10^{-6}$ and reward scaling of $1000$.

Figure~\ref{fig:large_cliffwalking_policies} shows the policies for the three trained agents, averaged over 10 runs. None of the three policies converge to a stable path, which was the case for the shorter cliff walking environment. PBIM and no IM's policy partially exhibit the behavior we want the agent to follow: move towards the right side of the grid and not fall over the cliff. In the case of RND, exploration was insufficient, as it was unable to reach the states beyond the 42nd column. The Q-values are relatively low on the right side, as the agents have yet to find the goal reward. In the case of PBIM, it has found the reward but has yet to propagate it back.

\begin{figure*}[t]
    \begin{subfigure}{\textwidth}
        \centering
        \includegraphics[width=\textwidth]{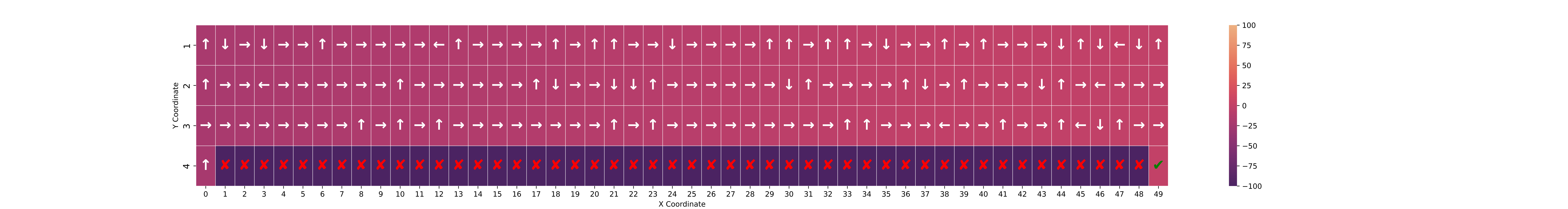}
        \caption{No IM}
        \label{fig:large-noim}
    \end{subfigure}
    \begin{subfigure}{\textwidth}
        \centering
        \includegraphics[width=\textwidth]{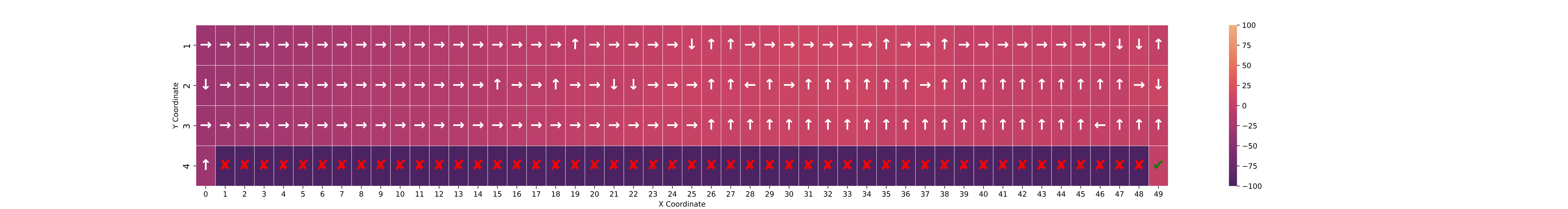}
        \caption{PBIM, normalized}
        \label{fig:large-pbimnorm}
    \end{subfigure}
    \begin{subfigure}{\textwidth}
        \centering
        \includegraphics[width=\textwidth]{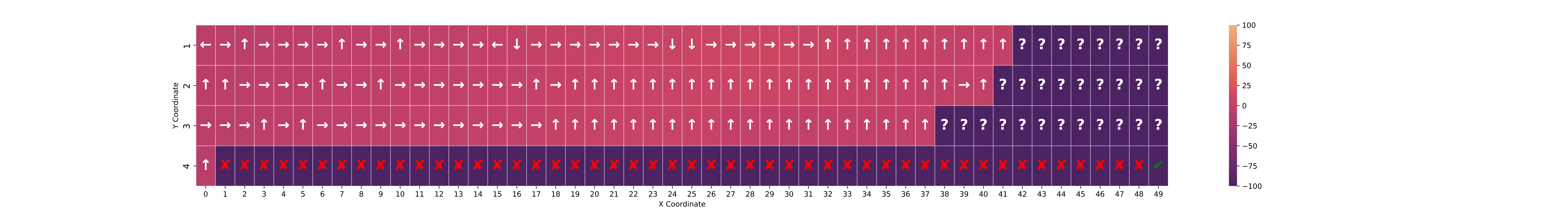}
        \caption{RND}
        \label{fig:large-rnd}
    \end{subfigure}
    \caption{Final policies of trained agents. Arrows point to the agent's movement in every given position. A brighter hue indicates a higher q-value. The darkened tiles marked with a `?' indicate that the agent did not reach the position.}
    \label{fig:large_cliffwalking_policies}
\end{figure*}


\end{document}